\documentclass[twoside]{article}
\usepackage[accepted]{publicpaper}
\usepackage[utf8]{inputenc}
\usepackage{amsthm, amssymb, amsmath}
\usepackage{mathtools}
\usepackage{graphicx}
\usepackage[colorlinks=true,linkcolor=black, citecolor=blue, breaklinks]{hyperref}
\usepackage[ruled]{algorithm2e}
\usepackage{dsfont}
\usepackage{subcaption}
\usepackage{chngcntr}
\usepackage{apptools}
\usepackage[backend=bibtex, hyperref=true, style=authoryear,maxcitenames=2]{biblatex}

\usepackage[final]{fixme}
\usepackage{tikz}
\usetikzlibrary{positioning}
\bibliography{uml}
\fxsetup{inline,nomargin,theme=color}
\AtAppendix{\counterwithin{lemma}{section}}

\DeclareCiteCommand{\cite}
  {\usebibmacro{prenote}}
  {\usebibmacro{citeindex}%
   \printtext[bibhyperref]{\usebibmacro{cite}}}
  {\multicitedelim}
  {\usebibmacro{postnote}}

\DeclareCiteCommand*{\cite}
  {\usebibmacro{prenote}}
  {\usebibmacro{citeindex}%
   \printtext[bibhyperref]{\usebibmacro{citeyear}}}
  {\multicitedelim}
  {\usebibmacro{postnote}}

\DeclareCiteCommand{\parencite}[\mkbibparens]
  {\usebibmacro{prenote}}
  {\usebibmacro{citeindex}%
    \printtext[bibhyperref]{\usebibmacro{cite}}}
  {\multicitedelim}
  {\usebibmacro{postnote}}

\DeclareCiteCommand*{\parencite}[\mkbibparens]
  {\usebibmacro{prenote}}
  {\usebibmacro{citeindex}%
    \printtext[bibhyperref]{\usebibmacro{citeyear}}}
  {\multicitedelim}
  {\usebibmacro{postnote}}

\DeclareCiteCommand{\footcite}[\mkbibfootnote]
  {\usebibmacro{prenote}}
  {\usebibmacro{citeindex}%
  \printtext[bibhyperref]{ \usebibmacro{cite}}}
  {\multicitedelim}
  {\usebibmacro{postnote}}

\DeclareCiteCommand{\footcitetext}[\mkbibfootnotetext]
  {\usebibmacro{prenote}}
  {\usebibmacro{citeindex}%
   \printtext[bibhyperref]{\usebibmacro{cite}}}
  {\multicitedelim}
  {\usebibmacro{postnote}}
  
  \newtheorem*{theorem*}{Theorem}
  \newtheorem*{itheorem}{Informal Theorem}
  \newtheorem{question}{Question}
  
\begin{document}
\newtheorem{lemma}{Lemma}
\newtheorem{corollary}{Corollary}
\newtheorem{theorem}{Theorem}
\newtheorem{proposition}{Proposition}

\theoremstyle{definition}
\newtheorem{definition}{Definition}

%

%

\twocolumn[

\aistatstitle{Optimality of Approximate Inference Algorithms on Stable Instances}
\aistatsauthor{ Hunter Lang \And David Sontag \And Aravindan Vijayaraghavan}
\aistatsaddress{MIT \And MIT \And Northwestern University} ] 

\begin{abstract}
  Approximate algorithms for structured prediction problems---such as
  LP relaxations and the popular $\alpha$-expansion algorithm
  \parencite{BoykovExpansion}---typically far exceed
  their theoretical performance guarantees on real-world
  instances. These algorithms often find solutions that are very close
  to optimal. The goal of this paper is to partially explain the
  performance of $\alpha$-expansion and an LP relaxation algorithm on
  MAP inference in Ferromagnetic Potts models (FPMs). Our main results
  give \emph{stability} conditions under which these two algorithms
  provably recover the optimal MAP solution. These theoretical results
  complement numerous empirical observations of good performance.
\end{abstract}

\section{INTRODUCTION}
For many problems in machine learning, there is a large gap between
the theoretical guarantees offered by the best algorithms and the
empirical performance of those algorithms on real data. For instance,
many MAP inference problems reduce to well-studied combinatorial
optimization problems that are computationally hard in the worst-case;
in practice, however, heuristic approaches often obtain solutions that
far surpass their worst-case guarantees. While worst-case analysis has
been the method of choice in theoretical computer science to reason
about algorithms, beyond-worst-case paradigms like average-case
analysis, smoothed analysis, and implicit assumptions like
stability have become increasingly popular in recent
years~\parencite{BS92,McSherry,ST00,BLstable,BBGStability}. Reconciling
this large gap between theory and practice is an important challenge
in machine learning.


Many tasks in modern machine learning---especially in computer vision
and natural language processing---are framed and solved as structured
prediction problems \parencite{structurebook, sp2, sp3}, where the
local structure of an instance can be used to inform global
decisions. Stereo vision is one such problem: given two input images
$L$ and $R$ (one left, one right), the task is to output a disparity
value for each pixel in the left image that tells how much that pixel
moved between $L$ and $R$. If two neighboring pixels have similar
intensities, the output should give them similar
disparities. Undirected graphical models, also known as Markov Random
Fields, provide a powerful framework for performing this type of
structured prediction. 

Solving the MAP inference problem in a Markov Random Field (MRF) gives
the maximum-probability configuration of variables (e.g., the set of
pixel disparities with maximum probability) taking into account the
interaction effects between nearby variables. An MRF is represented
using a graph $G = (V,E)$ in which each vertex $u \in V$ represents a
random variable that can take values (labels) in the discrete set
$L=\{1, 2, \dots, k\}$, and edges in $E$ represent direct dependencies
between different random variables. We consider
\emph{pairwise} MRFs, where dependencies are only along edges. If we let $g$
be a labeling that maps $V$ to $L$, we can write the MAP inference
task for a pairwise MRF in \emph{energy minimization} form as follows:
\begin{equation}
\label{eq:MRFEM}
\min_{g} \sum_{u\in V} c(u, g(u)) + \smashoperator{\sum_{(u,v) \in E}} \theta_{(u,v)}(g(u), g(v)),
\end{equation}
Here we can interpret $c(u, i)$ as the ``node cost'' of assigning
label $i$ to vertex $u$ and $\theta_{(u,v)}(i, j)$ as the ``edge
cost'' of simultaneously assigning label $i$ to $u$ and label $j$ to
$v$. In stereo vision, the observed pixel intensities can be used to
estimate the node and edge costs
\parencite{BoykovExpansion}. Computing the MAP assignment
corresponding to \eqref{eq:MRFEM} is known to be NP-hard for many
classes of MRFs \parencite{maphard}.

A well-studied special case of \eqref{eq:MRFEM} that has been
successful in practice is the \emph{Ferromagnetic Potts Model}
(FPM). Here, each edge cost function is a nonnegative \emph{weight}
$w(u,v) \in \mathbb{R}_{\ge 0}$ if $u$ and $v$ are assigned different
labels, and $0$ otherwise. We can assume without loss of generality
that the node costs are also nonnegative. The energy minimization
problem in an FPM is:
\begin{equation}
\label{eq:FPMEM}
\min_{g}Q(g) = \min_g\sum_{u \in V}c(u,g(u)) +
\smashoperator{\sum_{\substack{(u,v) \in E\\g(u) \neq g(v)}}}w(u,v),
\end{equation}
where we define $Q(g)$ to be the objective of labeling $g$. The
problem \eqref{eq:FPMEM} is known in theoretical computer science as
\textsc{Uniform Metric Labeling}, and is known to be NP-hard in the
worst-case \parencite{UMLKT}.  While polynomial-time inference
algorithms exist for MRFs with simple structure---like low tree-width
or submodularity---most graphical models that arise in real-world
applications do not have such simple structure.  In recent years,
though, much work has gone into finding tractable model classes and
efficient approximation algorithms for MAP inference.

Linear programming (LP) relaxations give one such class of algorithms.
These algorithms relax the MAP problem to a linear program, typically
by replacing constraints of the form $x \in \{0,1\}$ with $x \in [0,1]$,
then round the (potentially fractional) relaxed solutions back to
integral ones.
In fact, solutions to these linear programming relaxations often turn
out to be mostly integral for instances that arise in practice on
applications like stereo vision~\parencite{lp1, SontagEtAl_uai08, lp2,
  lp4, lp3}. This stands in stark contrast to our theoretical
understanding of linear programming relaxations on worst-case
instances\footnote{The standard LP relaxation for \textsc{Uniform
    Metric Labeling} has an integrality gap of $2$ in the worst
  case~\parencite{UMLKT,ugchard}.}.

Introduced by \textcite{BoykovExpansion}, the $\alpha$-expansion
algorithm is a simple and popular combinatorial algorithm for
approximate MAP inference. It works by iteratively improving an
initial labeling, each time trying to find the optimal ``expansion''
of a label $\alpha$. It is a \emph{local search} algorithm, and it may
get stuck in local energy minima.  Empirically, however,
$\alpha$-expansion seems to avoid bad local minima. 
\textcite{BoykovExpansion} apply the algorithm to stereo vision---they
construct a Ferromagnetic Potts Model from the images $L$ and $R$ and
use $\alpha$-expansion to find an approximate MAP solution, which
gives a disparity value for each pixel. Surprisingly, the solutions
returned by $\alpha$-expansion are strikingly similar to the MAP
solutions, and the output of the algorithm depends very little on the
initial input labeling, even though the algorithm is a 2-approximation
for \eqref{eq:FPMEM} in the worst case. This good performance has led
to wide adoption of the $\alpha$-expansion algorithm in practice.

The near-integrality of LP relaxations and the outstanding performance
of the $\alpha$-expansion algorithm on real-world data lead to the
following compelling question:
\begin{question}
Why do heuristics for MAP inference perform so much better in practice
than their worst-case theoretical guarantees suggest? Can we identify
properties of real-world instances that make them tractable?
\end{question}

Real world problem instances must have structure that worst-case ones
do not. To reconcile this large gap between theory and practice, we
study a structural property of these instances called \emph{stability}
which we think may be key to understanding their tractability.

For many real-world instances, the ground-truth corresponds to a MAP
assignment (optimal solution) that ``stands out''---the optimal
solution is unique and robust to small changes or errors in the
instance specification.  The edge costs involved in the objective are
often imprecise and may only be rough estimates of the similarity
between endpoints. Hence we are interested in finding the optimal
solution only if the instance is stable to errors or perturbations of
the edge costs.  \textcite{BLstable} introduced a formal definition of
stability in the context of graph partitioning problems to capture
instances with a clear ``ground-truth'' solution that does not change
under small multiplicative perturbations to the input.

\begin{definition}[$(\beta, \gamma)$-perturbation]
  Given a weight function on the edges $w: E \to \mathbb{R}_{\ge 0}$,
  a weight function $w'$ is called a $(\beta, \gamma)$-perturbation of
  $w$ iff for any $(u,v) \in E$, $$\frac{1}{\beta}w(u,v) \le w'(u,v)
  \le \gamma w(u,v).$$
\end{definition}
A \textsc{Uniform Metric Labeling} instance with edge weights $w$ is
said to be {\em $(\beta,\gamma)$-stable} iff the optimal solution
$g^*: V \to [k]$ is unique, and it remains unchanged for any
$(\beta,\gamma)$-perturbation $w'$ of the edge weights---that is,
$g^*$ is the unique optimal solution for the instance with edge costs
$w'$ (see Section~\ref{sec:stability} for a formal definition). Note
that such an instance only needs to be stable to multiplicative
perturbations of the edge weights and not to perturbations of the node
costs.

It may sometimes be too strong to assume that the optimal solution to
the perturbed instance remains completely unchanged. In practice, when
the edge costs are perturbed, the optimal solution may change a bit;
however, there is often a {\em stable region} of each instance where
the MAP assignment remains optimal. We capture such instances by
introducing a weaker assumption called $(\beta,\gamma,S)$-weak
stability: roughly speaking, weakly stable instances have a region $S$
on which good solutions always agree with the MAP assignment, even
under edge weight perturbations (see Definition~\ref{def:weakstab} for
a formal definition). We now present our results for stable and weakly
stable instances.

\paragraph{Our Results.} 
We give provable guarantees for a natural LP relaxation and the
$\alpha$-expansion algorithm when the input \textsc{Uniform Metric
  Labeling} instance is sufficiently stable.
Our first result shows that a standard LP relaxation for
\textsc{Uniform Metric Labeling} is exact on sufficiently stable
instances (please see Theorem~\ref{thm:strongthm} for a formal
statement).
\begin{itheorem}
\label{ithm: LP}
Solving the LP relaxation \eqref{eq:umllp} gives the MAP solution for
any $(2,1)$-stable instance of \textsc{Uniform Metric Labeling}.
\end{itheorem}

Our next result gives provable guarantees in the much more general
setting of weak stability: $\alpha$-expansion recovers the optimal
solution on the ``stable region'' of \emph{any} \textsc{Uniform Metric
  Labeling} instance (please see Theorem~\ref{thm:weakstab} for a
formal statement).

\begin{itheorem}
\label{ithm: alphaexpansion}
Given any \textsc{Uniform Metric Labeling} instance that is
$(1,2,S)$-weakly stable, the $\alpha$-expansion algorithm recovers the
optimal solution on $S$. If the entire instance is $(1,2)$-stable
(i.e. $S = V$), $\alpha$-expansion finds the optimal solution.
\end{itheorem}

The above theorem shows that recovering the MAP assignment in the
stable region $S$ is tractable in polynomial time. While this implies
recovery of the entire MAP assignment under the standard notion of
stability (i.e., $S=V$), the theorem gives recovery guarantees even
when the whole instance is not stable. This gives a partial guarantee
for $\alpha$-expansion, an iterative hill-climbing algorithm.

While both the results require stability to multiplicative
perturbations up to a factor $2$, these two stability conditions
($(2,1)$-stability and $(1,2)$-stability) seem qualitatively
different. We show in Section \ref{sec:counter} that the optimality of
both of these algorithms breaks down when the stability conditions of
the two theorems are switched.

\section{RELATED WORK}
Instance stability has been studied in the context of graph
partitioning problems like Max-Cut~\parencite{BLstable,
  BDLS,MMWC4Stab} and minimum multiway
cut~\parencite{MMWC4Stab,TwoStab}, clustering problems like $k$-means
and $k$-median~\parencite{ABSStability,
  BalcanLiangStability,WhiteStability,TwoStab,additiveStability}, and the traveling
salesman problem~\parencite{tspStability}.

Our work is inspired by \textcite{MMWC4Stab} and
\textcite{TwoStab}. \textcite{MMWC4Stab} developed a general framework
to analyze stable instances of graph partitioning problems, showing
that if there exists a convex relaxation and a rounding scheme for a
problem satisfying certain properties, then the convex relaxation is
exact for sufficiently stable instances of the problem. They also
designed a new polynomial time iterative algorithm for ``weakly
stable'' instances of the problem, where the optimal solution can
change slightly under perturbations of the weights. The amount of
stability required depends on the guarantees of the rounding
scheme. \textcite{MMWC4Stab} applied this framework to the
\textsc{Minimum Multiway Cut} problem~\parencite{MMWC}, which is a
special case of \textsc{Uniform Metric Labeling}. They give a
polynomial-time algorithm for 4-stable\footnote{In \textsc{Minimum
    Multiway Cut}, the stability parameter is given as the product
  $\beta\gamma$, since a $(\beta, \gamma)$-perturbation is equivalent
  to a $(\gamma, \beta)$-perturbation in that setting.}  instances of
\textsc{Minimum Multiway Cut}. \textcite{TwoStab} also studied
\textsc{Minimum Multiway Cut} and designed a better rounding scheme to
give provable guarantees for $2 - 2/k$-stable instances.

We use the same framework as \textcite{MMWC4Stab, TwoStab} to prove
integrality of the LP relaxation \eqref{eq:umllp}
(Theorem~\ref{thm:strongthm}). However, there are several new
technical challenges that we need to address to prove our results, and
we briefly describe them below.

Unlike the \textsc{Minimum Multiway Cut} problem, there are two
different costs in \textsc{Uniform Metric Labeling}: edge weights and
node costs. Our notion of stability only assumes that the optimal
solution does not change under perturbations to edge weights; we make
no assumptions about perturbations to node costs.

While there is a simple reduction from an instance of \textsc{Uniform
  Metric Labeling} to an instance of \textsc{Minimum Multiway
  Cut}~\parencite{BoykovGPM}, it converts all the node costs into edge
weights. Using this reduction would effectively force us to assume
stability with respect to perturbations of node costs as
well. Further, the reduction creates edges of very large weight, so
the stability condition required becomes very stringent. To address
these challenges, we first show the existence of a new rounding scheme
for the standard LP relaxation that delicately trades off the loss to
the LP solution on the node costs with the loss on the edge
weights. By contrast, the rounding scheme of \parencite{TwoStab} may
incur too much loss on the node costs relative to the LP, but node
costs are irrelevant for \textsc{Minimum Multiway Cut}. These new
rounding guarantees suffice for our theorems.

To obtain provable optimality guarantees for the $\alpha$-expansion
algorithm, we relate the improvement in each step of
$\alpha$-expansion to the performance of a linear program on a
modified instance. We carefully perturb the weights of the given
instance (based on the current solution), to argue that if the current
solution is {\em not} optimal on the stable portion of the instance,
then an $\alpha$-expansion step finds a new solution with smaller
cost. This proves that the $\alpha$-expansion algorithm recovers the
optimal solution restricted to the stable region.

\section{BACKGROUND}
\label{sec:prelim}
\subsection{$\alpha$-expansion}
Solving \eqref{eq:FPMEM} is NP-hard \parencite{BoykovExpansion,
  UMLKT}, but several efficient approximation algorithms exist. In
this work, we focus on a simple combinatorial algorithm known as
$\alpha$-expansion, which works by iteratively improving an initial
labeling with ``expansion moves.''
\begin{definition}[Expansion Move]
  Let $f$ be an arbitrary labeling $f: V \to L$; $f$ gives rise to a
  partition $S^f_1, \ldots, S^f_k$ of the vertices, where $v \in S^f_i$
  if and only if $f(v) = i$. We call a labeling $g: V \to L$ an
  \emph{$\alpha$-expansion} of $f$ if the following two conditions
  hold:
  \begin{align*}
  S^f_{\alpha} &\subset S^g_{\alpha}; \;\;\;\;S^g_{i} \subset S^f_{i},\ \ i \ne \alpha.
  \end{align*}
In other words, the set of vertices labeled $\alpha$ may grow from $f$
to $g$, and all other label sets $S_{i}$ may not grow.
\end{definition}
\begin{algorithm}[t]
  Initialize a labeling $f: V \to L$.

  Set \texttt{continue = True}.

  \While{\texttt{continue}} {
  Set \texttt{continue = False}.

  \For{$\alpha \in L$} {
    Find the optimal $\alpha$-expansion $h$ of $f$.

    \If{$Q(h) < Q(f)$} {
      Set $f = h$ and \texttt{continue = True}.
    }
  }
}
  \Return $f$.
  \caption{$\alpha$-expansion Algorithm}
  \label{alg:expansion}
\end{algorithm}
The full procedure is described in Algorithm
\ref{alg:expansion}. \textcite{BoykovExpansion} show that the optimal
$\alpha$-expansion move for a given labeling $f$ and label $\alpha$
can be found by solving a minimum cut problem in an auxiliary graph
$G_{\alpha,f}$. 
Finally, they show that $\alpha$-expansion is a 2-approximation for
\eqref{eq:FPMEM}.

Due to its simplicity, good empirical performance, and the
availability of very fast implementations, Algorithm
\ref{alg:expansion} has seen widespread use in practice
\parencite{exp1, exp4}.

\subsection{LP Relaxations}
Linear programming (LP) relaxations are also commonly used to find
approximate MAP solutions. We will make extensive use of the following
LP relaxation of \eqref{eq:FPMEM} as a tool to analyze
$\alpha$-expansion and as an algorithm itself:
\begin{equation}
  \label{eq:umllp}
  \begin{split}
  \min_{\{\bar{u}\}} \sum_{u\in V}\sum_{i \in L}c(u,i)\bar{u}_i + \sum_{(u,v)\in E}w(u,v)d(u,v)\\
    \begin{aligned}
      \text{s.t.}\;
      & \sum_i\bar{u}_i = 1, \; 
      &&\forall u\in V,\ \forall i \in L\\
      & d(u,v) = \frac{1}{2}||\bar{u} - \bar{v}||_1,\; 
      &&\forall (u,v) \in E\\
      & \bar{u}_i \ge 0,\; 
      &&\forall u \in V,\ i \in L.
    \end{aligned}
  \end{split}
\end{equation}
Here $\bar{u}$ is the length-$k$ vector of fractional assignments at
node $u$. Note that the second constraint can easily be linearized
using edge variables. With a slight abuse of notation, we
say $$Q(\{\bar{u}\}) = \sum_{u\in V}\sum_{i \in L}c(u,i)\bar{u}_i +
\sum_{(u,v)\in E}w(u,v)d(u,v).$$ Any integer labeling $f$ is also a
feasible point of \eqref{eq:umllp}; $f$ corresponds to
$\{\bar{u}^f\}$, where $\bar{u}^f_i = 1$ if $f(u) = i$ and 0
otherwise. In that case, the distance $d^f(u,v) = 1$ if $f(u) \ne
f(v)$ and 0 otherwise, and $Q(\{\bar{u}^f\}) = Q(f)$. We say the LP
relaxation is \emph{tight} if an optimal solution to \eqref{eq:umllp}
is an integer solution (i.e. $\bar{u}_i \in \{0,1\}$ for all $u$ and
$i$).
On Ferromagnetic Potts Models, the relaxation \eqref{eq:umllp} is
equivalent to the \emph{local polytope} relaxation commonly studied in
MAP inference \parencite{wainwrightjordan, almostbalanced}. The
appendix contains a proof of that equivalence.

\subsection{Stability}\label{sec:stability}

We now formally define stable instances of \textsc{Uniform Metric
  Labeling}.
\begin{definition}[$(\beta, \gamma)$-stable]
  An instance of \textsc{Uniform Metric Labeling} $(G, c, w, L)$ with
  graph $G$, node costs $c$, weights $w$, labels $L$, and optimal
  integer solution $g$ is called $(\beta, \gamma)$-stable if for any
  $(\beta, \gamma)$-perturbation $w'$ of $w$, and any labeling $h \ne
  g$, $Q'(h) > Q'(g),$ where $Q'$ is the objective with costs $c$ and
  weights $w'$.

That is, $g$ is the unique optimal solution in any $(\beta,
\gamma)$-perturbation. Note that node costs $c(u,i)$ are not perturbed
in this definition. Requiring stability under perturbations of the
costs $c(u,i)$ would lead to a stronger condition on the input
instance; perturbations to $w$ are sufficient for our theorems. As
$\beta$ and $\gamma$ increase, the stability condition becomes
increasingly strict.
\end{definition}

The next definition captures a broader, more local version of
stability, where the instance is stable with respect to a region $S
\subset V$.
\begin{definition}[$(\beta, \gamma, S)$-weakly-stable]
  \label{def:weakstab}
  For some set $S \subset V$, an instance $(G, c, w, L)$ of
  \textsc{Uniform Metric Labeling} with optimal solution $g$ is said
  to be $(\beta, \gamma, S)$-weakly-stable if for any $(\beta,
  \gamma)$-perturbation $w'$ of the weights $w$ and any labeling $h:
  V\to L$, \begin{equation}
    \label{eq:weakstab}
    h_S \ne g_S \implies Q'(h) > Q'(g).
  \end{equation}
  Here $h_S$ and $g_S$ are the restrictions of $h$ and $g$ to $S$, and
  $Q'$ is the objective with costs $c$ and weights $w'$. We call $S$
  the \emph{stable set} or \emph{stable region} of the instance. The
  weak stability property says that for any $(\beta,
  \gamma)$-perturbation of the edge weights, any solution that
  disagrees with the optimal solution $g$ on the stable set must have
  a worse objective value. Note that a $(\beta, \gamma,
  V)$-weakly-stable instance is $(\beta, \gamma)$-stable.
\end{definition}

\section{LP RELAXATION AND (2,1)-STABLE INSTANCES}
\label{sec:strongstab}
In this section we prove that the LP relaxation \eqref{eq:umllp},
which as we mentioned is equivalent to the local consistency relaxation, is
tight on (2,1)-stable instances. Our proof follows the framework
introduced by \textcite{MMWC4Stab} and \textcite{TwoStab}. We assume
\emph{for a contradiction} that the LP is fractional at some node, then use
the probabilistic method to show that there must be a labeling that
violates stability of the instance. To construct this violating
labeling, we build a randomized rounding algorithm for
\eqref{eq:umllp} that provides certain probabilistic guarantees. We
show that on a carefully constructed fractional input, this rounding
algorithm outputs a solution that violates stability in
expectation. Thus, there must be some labeling that violates
stability, and therefore the optimal LP solution must take integer
values at every node. The rounding algorithm defined below and the
proof techniques in this section are also used in Section
\ref{sec:weakstab} to analyze $\alpha$-expansion.

To begin, we describe the rounding procedure. This algorithm only
works on inputs that are ``close'' to integer solutions; we will show
how to construct these so-called $\varepsilon$-close inputs shortly.
\begin{definition}[$\varepsilon$-close]
Fix $\varepsilon < \frac{1}{2}$. A solution $\{\bar{u}\}$ to LP
\eqref{eq:umllp} is $\varepsilon$-close to an integer labeling if for
each $u \in V$, there exists some $j$ such that $\bar{u}_j \ge 1 -
\varepsilon$. Because of the simplicial constraint on each $\bar{u}$,
this index $j$ is unique; we can therefore refer to it as $j(u)$.
\end{definition}
The rounding algorithm $\mathcal{R}$ is defined in Algorithm \ref{alg:rounding}.
\begin{algorithm}[t]
  Define $P_i$ as the set of vertices labeled $i$.
  
  Let $\varepsilon = 1/(10k)$ and $\theta = 6/(5k)$. Note $\theta > \varepsilon$.

  Choose $r \in (0,\theta)$ uniformly at random.

  Choose $i \in \{1,\ldots, k\}$ uniformly at random.

  Apply the following rule to every node $u \in V$:

  \ \ \ \ If $\bar{u}_i < r$, add $u$ to $P_{j(u)}$. Otherwise, add $u$ to $P_i$.

  Return the partition $(P_1,\ldots,P_k)$.
  \caption{Rounding Algorithm $\mathcal{R}$}
  \label{alg:rounding}
\end{algorithm}

The following properties of $\mathcal{R}$ will help construct a
stability-violating labeling:
\begin{lemma}[Rounding Guarantees]
  \label{lem:roundingguar}
  Let $h$ be the (random) output of Algorithm \ref{alg:rounding} on an
  $\varepsilon$-close solution $\{\bar{u}\}$. Then:
  \begin{align*}
  \Pr[h(u) \ne j(u)] &\ge \frac{5}{6}(1-\bar{u}_{j(u)})\\
  \Pr[h(u) = i] &\le \frac{5}{6}\bar{u}_i,\;\; \forall i \ne j(u)\\
  \Pr[(u,v) \mbox{ not cut}] &\ge \frac{5}{6}(1-d(u,v))\\
  \Pr[(u,v) \mbox{ cut}] &\le \frac{5}{3}d(u,v),
  \end{align*}
  where $j(u)$ is the index such that $\bar{u}_{j(u)} \ge 1 - \varepsilon$.
\end{lemma}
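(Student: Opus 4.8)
The plan is to analyze each of the four inequalities directly from the definition of Algorithm \ref{alg:rounding}, using the fact that $r$ is uniform on $(0,\theta)$ and $i$ is uniform on $\{1,\dots,k\}$, and keeping track of the $\varepsilon$-closeness promise $\bar{u}_{j(u)} \ge 1-\varepsilon$. For the first bound, note $h(u) \ne j(u)$ requires that the randomly chosen label $i$ be assigned to $u$, i.e. $\bar{u}_i \ge r$, and also that $i \ne j(u)$ (since if $i = j(u)$ then $h(u) = j(u)$ trivially, whether we put $u$ in $P_{j(u)}$ or $P_i = P_{j(u)}$). Conditioning on the choice of $i$, $\Pr[\bar{u}_i \ge r \mid i] = \min(\bar{u}_i,\theta)/\theta$. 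Summing $\tfrac1k \sum_{i \ne j(u)} \min(\bar u_i,\theta)/\theta$ and using that each $\bar u_i \le \varepsilon < \theta$ for $i \ne j(u)$, this is $\tfrac{1}{k\theta}\sum_{i \ne j(u)} \bar u_i = \tfrac{1}{k\theta}(1 - \bar u_{j(u)})$; plugging in $\theta = 6/(5k)$ gives exactly $\tfrac{5}{6}(1-\bar u_{j(u)})$, so in fact equality holds here (hence the ``$\ge$''). The second bound is the same computation restricted to a single term: for $i \ne j(u)$, $\Pr[h(u) = i] = \tfrac1k \cdot \min(\bar u_i,\theta)/\theta = \tfrac1k \cdot \bar u_i/\theta = \tfrac{5}{6}\bar u_i$, again an equality.

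For the edge bounds I would split on whether the endpoints share an index. The cleanest route: $(u,v)$ is \emph{not} cut if the chosen $i$ lands "on the same side" for both endpoints. Consider the event that $j(u) = j(v) =: j$ and the chosen label is $i = j$: then both go to $P_j$, uncut. More generally, for a chosen label $i$, the pair is uncut iff ($\bar u_i \ge r$ and $\bar v_i \ge r$), or ($\bar u_i < r$ and $\bar v_i < r$ and $j(u) = j(v)$), or (one of $\bar u_i,\bar v_i \ge r$ with the other $< r$ but assigned-by-$j$ to the same label — which for $\varepsilon$-close inputs essentially cannot create agreement unless $j(u)=j(v)=i$). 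The key quantitative step is to lower-bound $\Pr[(u,v)\text{ not cut}]$ by $\tfrac1k\sum_i \min(\bar u_i, \bar v_i,\theta)/\theta$ (the "both above $r$" contribution), and then to check $\tfrac{1}{k\theta}\sum_i \min(\bar u_i,\bar v_i) \ge \tfrac{5}{6}(1 - d(u,v))$ using $d(u,v) = \tfrac12\|\bar u - \bar v\|_1 = 1 - \sum_i \min(\bar u_i,\bar v_i)$ and the fact that the $j(u),j(v)$ coordinates are the only ones that can exceed $\theta$ — and one must argue the truncation at $\theta$ costs nothing, which is where $\varepsilon$-closeness and $\theta > \varepsilon$ enter (at most the two large coordinates are truncated, and $\min(\bar u_i,\bar v_i,\theta)$ differs from $\min(\bar u_i,\bar v_i)$ only when both are large, i.e. $i = j(u) = j(v)$, contributing the same to both sides after accounting). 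For the last bound, $\Pr[(u,v)\text{ cut}] \le \Pr[h(u)\ne j(u)] + \Pr[h(v) \ne j(v)] + \Pr[j(u)\ne j(v),\ h(u)=j(u),h(v)=j(v)]$; using the first bound twice gives $\tfrac56(1-d(u,v|\text{-ish}))$ terms, and one shows the sum is $\le \tfrac{5}{3}d(u,v)$ by relating $(1-\bar u_{j(u)}) + (1-\bar v_{j(v)})$ and the cross term to $d(u,v)$ via the triangle-type inequality $d(u,v) \ge \max(\tfrac12|1-\bar u_{j(u)} - (\text{stuff})|,\dots)$ — concretely, when $j(u) = j(v)$, $d(u,v) \ge \tfrac12(\bar u_{j} (1-\bar v_j) + \dots)$ dominates, and when $j(u)\ne j(v)$, $d(u,v)$ is close to $1$ so $\tfrac53 d(u,v)$ is a generous budget.

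The main obstacle I anticipate is the edge bounds, specifically handling the case $j(u) \ne j(v)$ cleanly and making the truncation-at-$\theta$ bookkeeping rigorous: one has to verify that replacing $\bar u_i$ by $\min(\bar u_i,\theta)$ inside the sums never loses more than the slack already present, and this is exactly why the constants $\varepsilon = 1/(10k)$ and $\theta = 6/(5k)$ were chosen (note $\theta = \tfrac{12}{10k} = 12\varepsilon$, giving comfortable room). A secondary subtlety is that the events "$\bar u_i \ge r$" for different $i$ are \emph{not} independent of the choice of $i$ but \emph{are} being averaged over $i$, so one must be careful to condition on $i$ first and only then take the probability over $r$; fortunately everything is a clean expectation over the two independent uniform variables, so Fubini/linearity does all the work once the per-$i$ geometry is sorted out. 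I would organize the write-up as: (1) the two node bounds as a single paragraph of exact computation; (2) a lemma-internal claim identifying $\sum_i \min(\bar u_i,\bar v_i) = 1 - d(u,v)$ and that truncation is free; (3) the "not cut" lower bound; (4) the "cut" upper bound via union bound plus the triangle inequality for $d$.
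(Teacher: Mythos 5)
Your node bounds are exactly the paper's computation and are fine. The genuine gap is in the edge bounds, specifically the case $j(u)=j(v)=j$. You propose to lower-bound $\Pr[(u,v)\text{ not cut}]$ by the ``both above $r$'' contribution $\frac{1}{k}\sum_i \min(\bar{u}_i,\bar{v}_i,\theta)/\theta$ and to argue the truncation at $\theta$ is free. It is not: when $j(u)=j(v)=j$ the coordinate $j$ has $\min(\bar{u}_j,\bar{v}_j)\ge 1-\varepsilon \gg \theta$, so truncation there loses almost the entire mass. Concretely, take $\bar{u}=\bar{v}$ with $\bar{u}_j = 1-\varepsilon$: then $d(u,v)=0$, the target is $\frac56$, and the true uncut probability is $1$ (identical vectors are rounded identically), but your ``both above $r$'' quantity equals $\frac{1}{k}\bigl(1+\varepsilon/\theta\bigr) = \frac{13}{12k} < \frac56$ for every $k\ge 2$. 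The uncut mass you are discarding --- both endpoints falling back to their common default label $P_j$ when $i\ne j$ and $r$ exceeds both coordinates --- is the dominant contribution in this case and cannot be recovered by any bookkeeping of the truncation. The paper handles this case by a different argument: it first shows $\Pr[(u,v)\text{ cut}]\le \frac{1}{k\theta}\sum_{i\ne j}|\bar{u}_i-\bar{v}_i| \le \frac53 d(u,v)$ (the exact interval computation over $r$, which you do describe), and then uses that in this case $d(u,v)\le 2\varepsilon \le \frac15$, so $1-\frac53 d \ge \frac56(1-d)$. Your min-sum argument is reserved for the case $j(u)\ne j(v)$, where indeed no coordinate has both entries above $\varepsilon$, truncation never bites, and one gets exactly $\frac{1}{k\theta}\sum_i\min(\bar{u}_i,\bar{v}_i)=\frac56(1-d)$; that part of your plan matches the paper.

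The ``cut'' upper bound has a parallel problem: the union bound $\Pr[\text{cut}]\le\Pr[h(u)\ne j(u)]+\Pr[h(v)\ne j(v)]+\dots$ is too lossy when $j(u)=j(v)$, since with $\bar{u}=\bar{v}$ fractional it gives a strictly positive bound while $\frac53 d(u,v)=0$ (and the true probability is $0$, because the two endpoints always round together). The correct route is again the exact computation: given $i\ne j$, the edge is cut precisely when $r\in(\min(\bar{u}_i,\bar{v}_i),\max(\bar{u}_i,\bar{v}_i)]$, giving $\frac{1}{k\theta}\sum_{i\ne j}|\bar{u}_i-\bar{v}_i|\le\frac{2}{k\theta}d(u,v)=\frac53 d(u,v)$; in the case $j(u)\ne j(v)$ one notes $d(u,v)\ge 1-2\varepsilon$ so $\frac53 d(u,v)>1$ and the bound is vacuous, which is the observation you make (``generous budget''). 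So the overall structure you need is a clean case split on $j(u)=j(v)$ versus $j(u)\ne j(v)$, with the interval argument plus the smallness of $d$ in the first case, and your min-sum argument in the second; as written, your plan for the first case would fail.
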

The appendix contains a proof of these guarantees. Note that since the
rounding only works on $\varepsilon$-close solutions, we cannot turn
these properties into an approximation algorithm. We can now use
$\mathcal{R}$ to prove the main theorem of this section:
\begin{theorem}
  \label{thm:strongthm}
  On a (2,1)-stable instance of \textsc{Uniform Metric Labeling} with
  optimal integer solution $g$, the LP relaxation \eqref{eq:umllp} is
  tight.
\end{theorem}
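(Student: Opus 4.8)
The plan is to follow the probabilistic-method template of \textcite{MMWC4Stab, TwoStab}: assume the LP is not tight, then use the rounding algorithm $\mathcal{R}$ to produce, in expectation, a labeling that beats $g$ under a carefully chosen $(2,1)$-perturbation, contradicting stability. First note that tightness is equivalent to the optimal value of \eqref{eq:umllp} equalling $Q(g)$: since $g$ is the best \emph{integer} labeling and \eqref{eq:umllp} is a relaxation, the LP value is always $\le Q(g)$, with equality exactly when the integer point $\{\bar{u}^g\}$ is LP-optimal. So suppose for contradiction that some optimal LP solution $\{\bar{u}^*\}$ has $Q(\{\bar{u}^*\}) < Q(g)$ (such a $\{\bar{u}^*\}$ must be fractional at some node, else it would be a labeling beating $g$). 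I will exhibit a $(2,1)$-perturbation $w'$ and a labeling $h\neq g$ with $Q'(h)\le Q'(g)$, contradicting $(2,1)$-stability.

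The first step is to turn $\{\bar{u}^*\}$ into an $\varepsilon$-close input for $\mathcal{R}$. Take the convex combination $z := (1-\varepsilon)\,\bar{u}^g + \varepsilon\,\bar{u}^*$ with $\varepsilon = 1/(10k)$; it is feasible for \eqref{eq:umllp}, and since $z_{u,g(u)}\ge 1-\varepsilon$ it is $\varepsilon$-close with $j(u)=g(u)$ for every $u$. Two facts about $z$ do the work: (i) the node-cost part is linear in $\varepsilon$, so $\sum_u\sum_i c(u,i)z_{u,i} = (1-\varepsilon)C_g + \varepsilon C^*$, where $C_g := \sum_u c(u,g(u))$ and $C^* := \sum_u\sum_i c(u,i)\bar{u}^*_{u,i}$; and (ii) $d(\cdot,\cdot)$ is convex, so $d_z(u,v)\le(1-\varepsilon)d_{\bar{u}^g}(u,v)+\varepsilon\,d_{\bar{u}^*}(u,v)$, which reads $d_z(u,v)\le\varepsilon\,d_{\bar{u}^*}(u,v)$ on edges with $g(u)=g(v)$ and $1-d_z(u,v)\ge\varepsilon\bigl(1-d_{\bar{u}^*}(u,v)\bigr)$ on edges with $g(u)\neq g(v)$.

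Then I would pick the perturbation $w'(u,v)=w(u,v)$ when $g(u)\neq g(v)$ and $w'(u,v)=\tfrac12 w(u,v)$ when $g(u)=g(v)$ --- a valid $(2,1)$-perturbation --- and run $\mathcal{R}$ on $z$ to get a random labeling $h$. From the node guarantees of Lemma~\ref{lem:roundingguar} and $c\ge 0$, one gets $\mathbb{E}\!\left[\sum_u c(u,h(u))\right]\le\tfrac56\bigl((1-\varepsilon)C_g+\varepsilon C^*\bigr)+\tfrac16 C_g$, i.e.\ $C_g-\mathbb{E}\!\left[\sum_u c(u,h(u))\right]\ge\tfrac{5\varepsilon}{6}(C_g-C^*)$. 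Decomposing $Q'(g)-\mathbb{E}[Q'(h)]$ as this node-cost gain, plus $\sum_{g(u)\neq g(v)}w(u,v)\Pr[(u,v)\text{ not cut by }h]$, minus $\tfrac12\sum_{g(u)=g(v)}w(u,v)\Pr[(u,v)\text{ cut by }h]$, and feeding in the edge guarantees $\Pr[\text{not cut}]\ge\tfrac56(1-d_z)$, $\Pr[\text{cut}]\le\tfrac53 d_z$ together with (ii), one arrives at
\begin{align*}
Q'(g)-\mathbb{E}[Q'(h)]\ \ge\ \tfrac{5\varepsilon}{6}\Bigl[(C_g-C^*)&+\!\!\sum_{g(u)\neq g(v)}\!\! w(u,v)\bigl(1-d_{\bar{u}^*}(u,v)\bigr)\\
&-\!\!\sum_{g(u)=g(v)}\!\! w(u,v)\,d_{\bar{u}^*}(u,v)\Bigr].
\end{align*}
Splitting $\sum_{E}w\,d_{\bar{u}^*}$ over the two edge types and rearranging $Q(\{\bar{u}^*\})<Q(g)$ shows the bracket is exactly $Q(g)-Q(\{\bar{u}^*\})>0$. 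Hence $\mathbb{E}[Q'(h)]<Q'(g)$, so some realization of $\mathcal R$ satisfies $Q'(h)<Q'(g)$, which forces $h\neq g$ --- contradicting $(2,1)$-stability. Therefore the LP value equals $Q(g)$ and the relaxation is tight.

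I expect the crux to be making this accounting close. The rounding costs a $\tfrac16$ additive loss on node costs and a $\tfrac53$ multiplicative loss on cut probabilities, and both must be absorbed within a mere factor-$2$ perturbation. The two non-obvious ingredients are: keeping \emph{full} weight (not half) on the edges $g$ cuts --- this is precisely what makes the gain term $\sum_{g(u)\neq g(v)}w(u,v)\Pr[\text{not cut}]$ large enough to compensate for the possibility that $\{\bar{u}^*\}$ is fractional even on those edges (a difficulty absent from \textsc{Minimum Multiway Cut}, where there are no node costs and the optimal LP solution is integral on the cut edges); and mixing $\{\bar{u}^*\}$ with the \emph{integral} point $\{\bar{u}^g\}$ so that, by linearity and convexity, every term scales like $\varepsilon$ and, after cancelling $\varepsilon$, the inequality collapses exactly to the strict LP gap.
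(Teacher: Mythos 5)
Your proposal is correct and follows essentially the same argument as the paper: the same $\varepsilon$-mixture of $\{\bar{u}^g\}$ with the optimal LP solution, the same $(2,1)$-perturbation that halves the weights of edges not cut by $g$, and the same application of Lemma~\ref{lem:roundingguar}, with the bound collapsing to the LP gap $Q(g)-Q(\{\bar{u}^*\})$. The only cosmetic difference is that you get strictness directly from the assumed strict LP gap (so some realization satisfies $Q'(h)<Q'(g)$ and hence $h\ne g$), whereas the paper argues $\Pr(h\ne g)>0$ from fractionality and uses stability to force $\mathbb{E}[Q'(g)-Q'(h)]<0$; both close the same contradiction.
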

\begin{proof}
Assume for a contradiction that the optimal LP solution
$\{\bar{u}^{LP}\}$ of \eqref{eq:umllp} is fractional. To construct a
stability-violating labeling, we will run Algorithm \ref{alg:rounding}
on a fractional labeling $\{\bar{u}\}$ constructed from
$\{\bar{u}^{LP}\}$ and the optimal integer solution $g$. We then use
Lemma \ref{lem:roundingguar} to show that in expectation, the output
of $\mathcal{R}(\{\bar{u}\})$ must be better than the optimal integer
solution in a particular $(2,1)$-perturbation, which contradicts
$(2,1)$-stability.

Let $\{\bar{u}^g\}$ be the solution to \eqref{eq:umllp} corresponding
to $g$, and define the following $\varepsilon$-close solution
$\{\bar{u}\}$: for every $u$ and every $i$, set $\bar{u}_i =
(1-\varepsilon)\bar{u}^g_i + \varepsilon\bar{u}^{LP}_i$. Note that
$\{\bar{u}\}$ is fractional and $j(u) = g(u)$ for all $u$.

Recall that $E_g$ is the set of edges cut by the optimal solution
$g$. Define the following $(2,1)$-perturbation $w'$ of the weights
$w$:
\begin{equation*}
  w'(u,v) = \begin{cases}
    w(u,v) &(u,v) \in E_g\\    
    \frac{1}{2}w(u,v) &(u,v) \in E\setminus E_g.
  \end{cases}
\end{equation*}
We refer to the objective with modified weights $w'$ as $Q'$ (that is,
$Q'$ is the objective in the instance with weights $w'$ and costs
$c$).

Now let $h = \mathcal{R}(\{\bar{u}\})$. To compare $g$ and $h$, we
will compute $\mathbb{E}[Q'(g) - Q'(h)]$, where the expectation is
over the randomness of the rounding algorithm. By definition,
\begin{align*}
  \mathbb{E}\left[Q'(g) - Q'(h)\right] &= \mathbb{E}[Q'(g) - Q'(h) | h =
    g]\Pr(h = g) \\
  &+ \mathbb{E}[Q'(g) - Q'(h) | h \ne g]\Pr(h \ne g).
\end{align*}
The first term of the sum above is clearly zero. Further, as
$\{\bar{u}\}$ is fractional, the guarantees in Lemma
\ref{lem:roundingguar} imply that $\Pr(h \ne g) > 0$. By
$(2,1)$-stability of the instance, any labeling $h \ne g$ must satisfy
$Q'(h) > Q'(g)$. So stability and fractionality of the LP imply
$\mathbb{E}[Q'(g) - Q'(h)] < 0$.

If we compute $\mathbb{E}[Q'(g) - Q'(h)]$ and simplify using Lemma
\ref{lem:roundingguar} and the definition of $w'$ (see the appendix
for a full derivation), we obtain:
\begin{equation*}
\begin{split}
  \mathbb{E}[Q'(g) - Q'(h)] \ge \frac{5}{6}\left(\sum_{u \in V}c(u,g(u)) + \smashoperator{\sum_{(u,v) \in E_g}}w(u,v)\right.\\
  - \left.\sum_{u \in V}\sum_{i \in L}c(u,i)\bar{u}_i\right. - \left.\smashoperator{\sum_{(u,v) \in E}}w(u,v)d(u,v)\right)
\end{split}
\end{equation*}
The first two terms are simply $Q(g)$, and the last two are the
objective $Q(\{\bar{u}\})$ of the LP solution $\bar{u}$. Since
$\bar{u} = (1 - \varepsilon)\bar{u}^g + \varepsilon\bar{u}^{LP}$ and
$Q(\{\bar{u}^{LP}\}) \le Q(\{\bar{u}^g\})$, the convexity of the LP
objective implies $Q(\{\bar{u}\}) \le Q(\{\bar{u}^g\}) = Q(g)$. So
$\mathbb{E}[Q'(g) - Q'(h)] \ge 0$. But stability of the instance and
fractionality of the LP solution implied $\mathbb{E}[Q'(g) - Q'(h)] <
0$.
\end{proof}

\section{$\alpha$-EXPANSION AND (1,2)-STABLE INSTANCES}
\label{sec:weakstab}
In this section, we study the broader stability condition given by
Definition \ref{def:weakstab}, where the instance may only be stable
with respect to some region $S$. We prove that for $(1, 2,
S)$-weakly-stable instances, the $\alpha$-expansion algorithm recovers
the optimal solution on the stable set $S$. In other words, given any
input $\alpha$-expansion is guaranteed to recover the optimal solution
on the stable portion $S$ of the instance. When $S = V$, this implies
recovery of the entire optimal solution for $(1,2)$-stable instances.

The proof shows that as long as the current labeling maintained by
$\alpha$-expansion does not agree with the optimal solution on the
stable set $S$, there must be an expansion move that decreases the
objective. \textcite{BoykovExpansion} show that $\alpha$-expansion
cannot terminate while there is an expansion move that decreases the
objective. Hence $\alpha$-expansion cannot terminate
until it agrees with the optimal solution on the stable set.

To show how to construct an expansion move that decreases the
objective, we will again use the probabilistic method. We actually
construct the expansion move by using the LP relaxation and rounding
algorithm from the previous section. Indeed, we will use the LP
solution on a modified instance to construct an input $\{\bar{u}\}$ to
the rounding algorithm $\mathcal{R}$. We show that as long as the
current labeling differs from the optimal one on the stable set, there
is some labeling in the support of $\mathcal{R}(\{\bar{u}\})$ that
decreases the objective. The following lemma shows that every labeling
in the support of $\mathcal{R}$ is an expansion move.
\begin{lemma}
    \label{lem:roundexpand}
    Let $\{\bar{u}\}$ be an input to Algorithm \ref{alg:rounding} and
    let $F: V\to L$ be the integer solution to which $\{\bar{u}\}$ is
    $\varepsilon$-close. Then for all labelings $h$ in the support of
    Algorithm \ref{alg:rounding}, there exists an $i$ such that $h$
    is an $i$-expansion of the labeling $F$.
  \end{lemma}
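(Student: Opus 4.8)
The plan is to unpack what Algorithm~\ref{alg:rounding} actually does to each vertex and read off the expansion structure directly. Recall that the algorithm picks a single random label $i \in \{1,\dots,k\}$ and a single random threshold $r \in (0,\theta)$; then every vertex $u$ is placed into $P_i$ if $\bar{u}_i \ge r$, and otherwise into $P_{j(u)}$, where $j(u) = F(u)$ by the definition of $\varepsilon$-closeness. So for a \emph{fixed} outcome of the random choices, the resulting labeling $h$ assigns each vertex either the label $i$ or its original label $F(u)$. I claim this $h$ is an $i$-expansion of $F$. To verify the two defining conditions of an expansion move: first, $S^F_i \subseteq S^h_i$, because any vertex $u$ with $F(u) = i$ has $\bar{u}_i = \bar{u}_{j(u)} \ge 1 - \varepsilon > \theta > r$ (using $\varepsilon = 1/(10k)$, $\theta = 6/(5k)$, so $1 - \varepsilon \ge \theta$ for all $k \ge 1$), hence $u$ lands in $P_i$; so no vertex leaves the $i$-block. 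Second, for any $\ell \ne i$, we need $S^h_\ell \subseteq S^F_\ell$: if $h(u) = \ell \ne i$, then by the rounding rule $u$ was placed in $P_{j(u)}$, so $\ell = j(u) = F(u)$, i.e.\ $u \in S^F_\ell$. This establishes that $h$ is an $i$-expansion of $F$, and since $i$ is in the support of the random label choice, every labeling in the support of the algorithm arises this way.

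The key observations to get right are (i) that on an $\varepsilon$-close input, $j(u)$ is well-defined and equals the index with $\bar{u}_{j(u)} \ge 1 - \varepsilon$, which is exactly $F(u)$ by hypothesis, and (ii) the numerical inequality $1 - \varepsilon \ge \theta$, equivalently $1 - 1/(10k) \ge 6/(5k)$, i.e.\ $10k - 1 \ge 12$, which holds for $k \ge 2$ (and the $k=1$ case is trivial since there is only one label). Point (ii) is what guarantees that vertices already labeled $i$ cannot be moved out of the $i$-block regardless of the threshold $r$, which is the only part of the expansion-move definition that requires more than just inspecting the rounding rule.

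I do not expect a serious obstacle here; the lemma is essentially structural and follows from staring at the one-line rounding rule. The one point worth stating carefully is the threshold bound ensuring $S^F_i \subseteq S^h_i$: the rounding rule alone says a vertex goes to $P_i$ when $\bar{u}_i \ge r$, and we need to be sure that $r < \bar{u}_i$ always holds when $F(u) = i$; this is where $r < \theta \le 1 - \varepsilon \le \bar{u}_{j(u)} = \bar{u}_i$ comes in. Everything else is a direct case analysis on the two possible destinations of each vertex.
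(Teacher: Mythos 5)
Your proof is correct and takes essentially the same route as the paper's: inspect the single rounding rule and note that each vertex receives either label $i$ or its current label $F(u)=j(u)$, which is exactly the definition of an $i$-expansion. One small simplification worth noting: when $F(u)=i$, \emph{both} branches of the rule place $u$ in $P_i$ (since $P_{j(u)}=P_i$), so your numerical inequality $1-\varepsilon\ge\theta$ --- which incidentally fails for $k=1$, contrary to the parenthetical in your first paragraph, though you patch that case separately --- is not needed at all.
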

  \begin{proof}
    Algorithm \ref{alg:rounding} makes a random choice of label
    $i$. Then for every vertex $u$, it assigns either label $i$ or
    label $F(u)$ to that vertex. Clearly the set of vertices labeled
    $i$ by $F$ does not decrease, and the only new label assigned is
    $i$. So the output labeling is an $i$-expansion of $F$.
  \end{proof}
Therefore, there must be an expansion move that decreases the
objective as long as the current labeling differs from the optimal one
on the stable set. We can now state the theorem.
\begin{theorem}
\label{thm:weakstab}
  On a $(1,2,S)$-weakly-stable instance $(G, c, w, L)$ with optimal
  solution $g$, let $f$ be the solution output by Algorithm
  \ref{alg:expansion}. Then $f_S = g_S$. That is, $\alpha$-expansion
  recovers the optimal solution on the stable set.\footnote{The proof
    below shows that $Q'(f) \le Q'(g)$ (in the sense of Definition
    \ref{def:weakstab}) is a necessary condition for local optimality
    of $f$ with respect to expansion moves. Interpreted in the
    framework of perturbation-stability, the local optimality argument
    of \textcite{BoykovExpansion} can also be shown to give Theorem
    \ref{thm:weakstab}.} When $S = V$, $\alpha$-expansion recovers the
  full optimal solution.
\end{theorem}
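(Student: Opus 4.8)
The plan is to argue by contradiction. Let $f$ be the labeling returned by Algorithm~\ref{alg:expansion}; by construction it is locally optimal, meaning no expansion move of $f$ strictly decreases $Q$ (this is the termination property of $\alpha$-expansion established by \textcite{BoykovExpansion}). Suppose toward a contradiction that $f_S \neq g_S$. Following the template of Theorem~\ref{thm:strongthm}, I will exhibit an expansion move of $f$ with strictly smaller objective, which is the contradiction. The move will be produced by running the rounding algorithm $\mathcal{R}$ on an $\varepsilon$-close input built from $f$ itself together with the optimal LP solution of a suitably perturbed instance; Lemma~\ref{lem:roundexpand} guarantees that \emph{every} labeling in the support of $\mathcal{R}$ on such an input is an expansion move of $f$, so it is enough to show that $\mathcal{R}$ produces a better-than-$f$ labeling \emph{in expectation}.

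The perturbation I would use is the mirror image of the one from Section~\ref{sec:strongstab}: a $(1,2)$-perturbation may only inflate weights, so instead of halving the edges \emph{not} cut by the reference labeling, I double them. That is, set $w'(u,v) = w(u,v)$ for $(u,v) \in E_f$ and $w'(u,v) = 2w(u,v)$ for $(u,v) \in E \setminus E_f$; this is a legal $(1,2)$-perturbation. Applying $(1,2,S)$-weak stability with the labeling $h = f$ and using $f_S \neq g_S$ gives the strict inequality $Q'(f) > Q'(g)$, where $Q'$ is the objective with weights $w'$. Let $\{\bar{u}^{LP'}\}$ be an optimal solution of \eqref{eq:umllp} on the instance $(G,c,w',L)$; feasibility of $\bar{u}^g$ yields $Q'(\{\bar{u}^{LP'}\}) \le Q'(g)$. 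Finally set $\{\bar{u}\} = (1-\varepsilon)\bar{u}^f + \varepsilon\,\bar{u}^{LP'}$ with $\varepsilon = 1/(10k)$; then $\bar{u}_{f(u)} \ge 1-\varepsilon$ for every $u$, so $\{\bar{u}\}$ is $\varepsilon$-close to $f$ with $j(u) = f(u)$, and I let $h = \mathcal{R}(\{\bar{u}\})$.

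The crux is the estimate $\mathbb{E}[Q(f) - Q(h)] \ge \tfrac{5}{6}\big(Q'(f) - Q'(\{\bar{u}\})\big)$, which I would derive from Lemma~\ref{lem:roundingguar} essentially as in Theorem~\ref{thm:strongthm}, but with the roles of the original and perturbed weights interchanged (because here local optimality lives in $Q$ while stability lives in $Q'$). The node-cost contribution is handled exactly as before using $\Pr[h(u) = f(u)] \le 1 - \tfrac{5}{6}(1-\bar{u}_{f(u)})$ and $\Pr[h(u)=i] \le \tfrac{5}{6}\bar{u}_i$; for the edge costs I split $E$ into $E_f$ and $E \setminus E_f$, using $\Pr[(u,v)\text{ not cut}] \ge \tfrac{5}{6}(1-d(u,v))$ on $E_f$ and $\Pr[(u,v)\text{ cut}] \le \tfrac{5}{3}d(u,v)$ on $E \setminus E_f$. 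The factor $\tfrac{1}{2}$ hidden in $w = \tfrac{1}{2}w'$ on $E \setminus E_f$ is precisely what absorbs the $\tfrac{5}{3}$ into a $\tfrac{5}{6}$, so the per-edge terms telescope to $\tfrac{5}{6}\big(\sum_{(u,v)\in E_f} w'(u,v) - \sum_{(u,v)\in E} w'(u,v)\,d(u,v)\big)$, i.e.\ $\tfrac{5}{6}$ times the edge part of $Q'(f) - Q'(\{\bar{u}\})$. Then, by convexity of the LP objective, $Q'(\{\bar{u}\}) \le (1-\varepsilon)Q'(f) + \varepsilon Q'(\{\bar{u}^{LP'}\}) \le (1-\varepsilon)Q'(f) + \varepsilon Q'(g) < Q'(f)$, so $\mathbb{E}[Q(f) - Q(h)] > 0$. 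Hence some labeling in the support of $\mathcal{R}$ strictly improves on $Q(f)$; since that labeling is an expansion move of $f$, this contradicts local optimality of $f$. Therefore $f_S = g_S$, and $f = g$ when $S = V$.

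The step I expect to be the main obstacle is getting the perturbation right and checking that the rounding bound still telescopes cleanly. For $(2,1)$-stability one deflates the non-cut edges, but that is not a legal $(1,2)$-perturbation, so one must inflate them instead --- and now the ``heavy'' $w'$-weight sits exactly on the edges that $\mathcal{R}$ is most likely to cut, so it is not obvious a priori that the same $\tfrac{5}{6}$ factor survives. Verifying that the $\tfrac{5}{3}$-versus-$\tfrac{5}{6}$ gap is cancelled exactly by the doubling, and that the two quantifiers line up (weak stability ranges over all $(1,2)$-perturbations, and the one I picked is among them, while LP optimality is used on that same perturbed instance) to produce the strict inequality $Q'(g) < Q'(f)$, is the delicate part; the rest is bookkeeping inherited from Theorem~\ref{thm:strongthm}. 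As the footnote notes, one could instead run the local-optimality computation of \textcite{BoykovExpansion} directly against $w'$ and bypass the LP, but reusing $\mathcal{R}$ keeps this section self-contained.
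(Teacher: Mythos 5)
Your proposal is correct and follows essentially the same route as the paper: the same $(1,2)$-perturbation doubling the weights off $E_f$, the same $\varepsilon$-close mixture $(1-\varepsilon)\bar{u}^f + \varepsilon\bar{u}^{LP'}$ fed to $\mathcal{R}$, the same bound $\mathbb{E}[Q(f)-Q(h)] \ge \tfrac{5}{6}\bigl(Q'(f)-Q'(\{\bar{u}\})\bigr)$ via Lemma~\ref{lem:roundingguar} with the $\tfrac{5}{3}$-versus-doubling cancellation, and the same chain $Q'(\{\bar{u}^{LP'}\}) \le Q'(g) < Q'(f)$ from LP optimality and weak stability, combined with Lemma~\ref{lem:roundexpand} and the termination property of $\alpha$-expansion. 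The only cosmetic difference is that you invoke convexity of the LP objective where the paper spells out the triangle inequality on $d(u,v)$, which is the same estimate.
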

We remark that Algorithm 1 is known to run in polynomial time in $|V|$
and $|L|$ as long as the costs are polynomially bounded.
\textcite{VekslerThesis} shows that it converges in a polynomial
number of iterations when the costs and weights are constant in $|V|$
and $|L|$, (or are integers that are polynomially bounded in $|V|$ and
$|L|$), and each iteration performs $|L|$ maximum flow
computations. In practice, Algorithm 1 typically takes only 2-5
iterations to converge \parencite{BoykovExpansion}.

\begin{proof}
We prove that as long as the labeling at the current iteration,
denoted by $f$, satisfies $f_S \ne g_S$, there exists an expansion
move of $f$ that decreases the objective.  We use Algorithm
\ref{alg:rounding} as a tool to show that this expansion must exist by
constructing a particular input $\{\bar{u}\}$ such that
$\mathcal{R}(\{\bar{u}\})$ has better objective than $f$ in
expectation as long as $f_S \ne g_S$.

Let $\{\bar{u}^{f}\}$ be the solution to LP \eqref{eq:umllp}
corresponding to $f$, and recall that $E_f$ is the set of edges cut by
$f$. Define the following $(1,2)$-perturbation of the weights $w$:
  \begin{equation*}
    w'(u,v) = \begin{cases}
      w(u,v) &(u,v) \in E_f\\
      2w(u,v) &(u,v) \notin E_f
    \end{cases}
  \end{equation*}
Now let $\{\bar{u}^{LP}\}$ be the optimal LP solution to the instance
$(G, c, w', L)$ with these \emph{modified} weights. We construct an
$\varepsilon$-close input $\{\bar{u}\}$ to the rounding algorithm:
$\bar{u} = (1-\varepsilon)\bar{u}^f + \varepsilon\bar{u}^{LP}$. For
each of these labelings, the distance $d$ in the LP relaxation
\eqref{eq:umllp} is given by:
\begin{align*} d^f(u,v) &= \frac{1}{2}||\bar{u}^f -
  \bar{v}^f||_1 = \mathds{1}[f(u) \ne f(v)]\\ d^{LP}(u,v) &=
  \frac{1}{2}||\bar{u}^{LP} - \bar{v}^{LP}||_1\\ d(u,v) &=
  \frac{1}{2}||\bar{u} - \bar{v}||_1.
\end{align*} 
From the definition of $\{\bar{u}\}$ and the triangle inequality,
\begin{equation}
  \label{eq:distineq}
  d(u,v) \le (1-\varepsilon)d^{f}(u,v) + \varepsilon d^{LP}(u,v).
\end{equation}

Let $h = \mathcal{R}(\{\bar{u}\})$ be the random labeling output by
the rounding algorithm $\mathcal{R}$ (Algorithm \ref{alg:rounding}) on
input $\{\bar{u}\}$. We now show that $\mathbb{E}[Q(f) - Q(h)] >
0$. That is, in expectation the rounding algorithm produces a solution
better than $f$ in the original instance.
  \begin{align*}
    \mathbb{E}[Q(f) - Q(h)] &= \sum_{u \in V}c(u, f(u))\Pr[h(u) \ne f(u)] \\
    &+\smashoperator{\sum_{(u,v) \in E_f}}w(u,v)\Pr[(u,v) \mbox{ not cut}] \\
    &-\sum_{u \in V}\sum_{i \ne f(u)}c(u, i)\Pr[h(u) = i] \\
    &-\smashoperator{\sum_{(u,v) \in E \setminus E_f}}w(u,v)\Pr[(u,v) \mbox{ cut}].
  \end{align*}
  Applying the rounding guarantees from Lemma \ref{lem:roundingguar}
  and using the definition of $w'$ (here we need a
  $(1,2)$-perturbation, not a $(2,1)$-perturbation), we obtain the
  following lower bound for $\mathbb{E}[Q(f) - Q(h)]$:
  \begin{align*}
    \mathbb{E}[Q(f) &- Q(h)] \ge \frac{5}{6}\left(\sum_{u \in V}c(u, f(u)) + \smashoperator{\sum_{(u,v) \in E_f}}w'(u,v)\right. \\
    &- \left.\sum_{u \in V}\sum_{i \in L}c(u, i)\bar{u}_i - \smashoperator{\sum_{(u,v) \in E}}w'(u,v)d(u,v)\right)
  \end{align*}
  Writing $f$ as $\{\bar{u}^f\}$,
  \begin{equation*}
    \begin{split}
      \mathbb{E}[Q(f) - Q(h)] &\ge \frac{5}{6} \left(\sum_{u \in V}\sum_{i\in L}c(u, i)\left(\bar{u}^{f}_i - \bar{u}_i\right)\right.\\
      &\left.+ \sum_{(u,v) \in E}w'(u,v)\left(d^{f}(u,v) - d(u,v)\right)\right)
    \end{split}
  \end{equation*}
  By \eqref{eq:distineq}, $d^{f}(u,v) - d(u,v) \ge
  \varepsilon\left(d^{f}(u,v) - d^{LP}(u,v)\right).$ Additionally,
  $\bar{u}^{f}_i - \bar{u}_i = \varepsilon\left(\bar{u}^{f}_i -
  \bar{u}^{LP}_i\right)$ for all $u,\ i$. Then
  \begin{equation*}
    \begin{split}
      \mathbb{E}[Q(f) &- Q(h)] \ge 
      \frac{5}{6}\varepsilon\left(\sum_{u \in V}\sum_{i\in L}c(u, i)\left(\bar{u}^{f}_i - \bar{u}^{LP}_i\right)\right.\\
      &+ \left.\sum_{(u,v) \in E}w'(u,v)\left(d^{f}(u,v) - d^{LP}(u,v)\right)\right)
    \end{split}
  \end{equation*}
  Using the definition of $Q'$ (the objective in the instance $(G, c,
  w', L)$),
  \begin{equation*}
    \mathbb{E}[Q(f) - Q(h)] \ge \frac{5}{6}\varepsilon\left(Q'(f) - Q'(\{\bar{u}^{LP}\})\right).
  \end{equation*}
  Recall that $g$ is the optimal solution in the original
  instance. Since $\{\bar{u}^{LP}\}$ is the optimal LP solution for
  the instance with weights $w'$, $Q'(\{\bar{u}^{LP}\}) \le
  Q'(g)$. Combining, we obtain:
  \begin{equation*}
    \mathbb{E}[Q(f) - Q(h)] \ge \frac{5}{6}\varepsilon\left(Q'(f) - Q'(g)\right)
  \end{equation*}
  
  If $f_S \ne g_S$, by the weak stability of the instance, $Q'(f) >
  Q'(g)$, so in that case $\mathbb{E}[Q(f) - Q(h)] > 0$ and there must
  be some labeling in the support of the rounding algorithm whose
  objective is less than $f$'s. The input to the rounding algorithm
  was $\varepsilon$-close to $f$, so by Lemma \ref{lem:roundexpand},
  every labeling in the support is an expansion move of $f$. So as
  long as $f_S \ne g_S$, some expansion move of $f$ decreases the
  objective. \textcite{BoykovExpansion} show that $\alpha$-expansion
  only terminates when no expansion move decreases the objective,
  hence $f_S = g_S$ when the algorithm terminates.
\end{proof}

\section{COUNTEREXAMPLES}
\label{sec:counter}
The algorithms analyzed in Sections \ref{sec:strongstab} and
\ref{sec:weakstab} give guarantees in two different stability
settings: $(2,1)$-stability, for the LP, and $(1,2)$-stability, for
$\alpha$-expansion. Here we show that each algorithm does not provably
recover the optimal solution in the other stability setting. That is,
the LP relaxation is not tight on $(1,2)$-stable instances, and
$\alpha$-expansion does not always find the optimal solution on
$(2,1)$-stable instances. Stability was tested by checking all
possible labelings in the adversarial perturbation
\begin{equation*}
  w'(u,v) = \begin{cases}
    \frac{1}{\beta}w(u,v) & (u,v) \in E \setminus E_g\\
    \gamma w(u,v) & (u,v) \in E_g.
    \end{cases}
\end{equation*}
A proof that this is sufficient can be found in the appendix, together
with the full details on how the counterexamples were generated.

\paragraph{LP and $(1,2)$-stability.}
\label{sec:lpcounter}

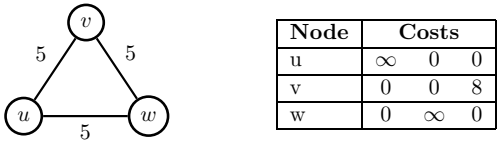
\begin{figure}[ht]
  \centering

\begin{subfigure}{.5\linewidth}
  \centering
  \scalebox{0.83}{  
  \tikzstyle{vertex}=[circle, draw=black, very thick, minimum size=5mm]
  \tikzstyle{edge} = [draw=black, line width=1]
  \tikzstyle{weight} = [font=\normalsize]
  \begin{tikzpicture}[scale=2,auto,swap]
    \foreach \pos /\name in {{(0,0)}/u,{(1,0)}/w,{(0.5,0.75)}/v}
    \node[vertex](\name) at \pos{$\name$};
    \foreach \source /\dest /\weight in {u/w/5}
    \path[edge] (\source) -- node[weight] {$\weight$} (\dest);
    \foreach \source /\dest /\weight/\pos in {u/v/5/{above left}, v/w/5/{above right}}
    \path[edge] (\source) -- node[weight, \pos] {$\weight$} (\dest);
  \end{tikzpicture}
  }
\end{subfigure}%
\begin{subfigure}{.5\linewidth}
  \centering
  \scalebox{0.83}{
\begin{tabular}{|l|ccc|}
\hline
\multicolumn{1}{|c|}{\textbf{Node}} & \multicolumn{3}{|c|}{\textbf{Costs}} \\
\hline
u & $\infty$ & 0        & 0      \\
\hline
v & 0        & 0        & 8      \\
\hline
w & 0        & $\infty$ & 0      \\
\hline
\end{tabular}
}
\end{subfigure}
  \caption{$(1,2)$-stable instance where the LP solution is fractional.}
\label{fig:counter1}
\end{figure}

\begin{table}[ht]
  \centering
  \begin{subtable}{.5\linewidth}
    \centering
    \subcaption{Integer OPT, Obj = 8.0}
    \scalebox{0.83}{    
      \begin{tabular}{|l|ccc|}
        \hline
        \multicolumn{1}{|c|}{\textbf{Node}} & \multicolumn{3}{|c|}{\textbf{Assignment}} \\
        \hline
        u & 0        & 0        & 1      \\
        \hline
        v & 0        & 0        & 1      \\
        \hline
        w & 0        & 0        & 1      \\
        \hline
      \end{tabular}
    }
  \end{subtable}%
  \begin{subtable}{.5\linewidth}
    \centering
    \subcaption{LP OPT, Obj = 7.5}
    \scalebox{0.83}{
      \begin{tabular}{|l|ccc|}
        \hline
        \multicolumn{1}{|c|}{\textbf{Node}} & \multicolumn{3}{|c|}{\textbf{Assignment}} \\
        \hline
        u & 0        & 0.5      & 0.5      \\
        \hline
        v & 0.5        & 0.5        & 0      \\
        \hline
        w & 0.5        & 0 & 0.5      \\
        \hline
      \end{tabular}
    }
  \end{subtable}%
  \caption{Solutions to instance in Figure
    \ref{fig:counter1}. Entries along each row are the assignments
    of the corresponding node to labels 1, 2, and 3, respectively.}\label{tab:solutions1}
\end{table}

Figure \ref{fig:counter1} shows a $(1,2)$-stable instance with three
nodes and three labels. The optimal integer solution and optimal
fractional solution are shown in Table \ref{tab:solutions1}. The
fractional solution has strictly lower objective value. Since no edges
are cut ($E_g = \emptyset$), the adversarial perturbation does not
change any edge weights. The optimal solution is unique, so this
instance is $(1,2)$-stable. Note that $\alpha$-expansion is exact on
this instance---no matter the starting labeling, expanding label 3
gives the optimal solution.

\paragraph{$\alpha$-expansion and $(2,1)$-stability.}
\label{sec:alphacounter}
\begin{figure}[ht]
\centering
\begin{subfigure}{.5\linewidth}
  \centering
  \scalebox{0.83}{  
  \tikzstyle{vertex}=[circle, draw=black, very thick, minimum size=5mm]
  \tikzstyle{edge} = [draw=black, line width=1]
  \tikzstyle{weight} = [font=\normalsize]
  \begin{tikzpicture}[scale=2,auto,swap]
    \foreach \pos /\name in {{(0,0)}/u,{(0,0.75)}/v,{(0.75,0.75)}/w,{(0.75,0)}/x}
    \node[vertex](\name) at \pos{$\name$};
    \foreach \source /\dest /\weight/\pos in {u/v/4/{left}, v/w/3/{above}, w/x/3/{right}}
    \path[edge] (\source) -- node[weight, \pos] {$\weight$} (\dest);
  \end{tikzpicture}
  }
\end{subfigure}%
\begin{subfigure}{.5\linewidth}
  \centering
  \scalebox{0.83}{
    \begin{tabular}{|l|ccc|}
      \hline
      \multicolumn{1}{|c|}{\textbf{Node}} & \multicolumn{3}{|c|}{\textbf{Costs}} \\
      \hline
      u & 2  & 0  & 0  \\
      \hline
      v & 0  & 0  & 100 \\
      \hline
      w & 0  & 100  & 0  \\
      \hline
      x & 100  & 0  & 0  \\
      \hline
    \end{tabular}
  }
\end{subfigure}
\caption{$(2,1)$-stable instance where $\alpha$-expansion does not
  find the optimal solution. }
\label{fig:counter2}
\end{figure}
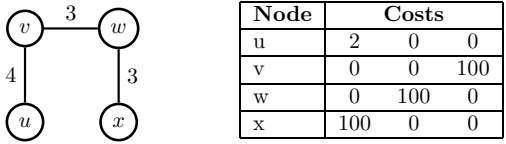

\begin{table}[ht]
  \centering
  \begin{subtable}{.5\linewidth}\centering
    \subcaption{Integer OPT, Obj = 3.0}
    \scalebox{0.83}{
    \begin{tabular}{|l|c|}
      \hline
      \multicolumn{1}{|c|}{\textbf{Node}} & \multicolumn{1}{c|}{\textbf{Label}} \\
      \hline
      u & 2\\
      \hline
      v & 2\\
      \hline
      w & 3\\
      \hline
      x & 3\\
      \hline
    \end{tabular}
    }
  \end{subtable}%
  \begin{subtable}{.5\linewidth}\centering
    \subcaption{Expansion Sol, Obj = 5.0}
    \scalebox{0.83}{
    \begin{tabular}{|l|c|}
      \hline
      \multicolumn{1}{|c|}{\textbf{Node}} & \multicolumn{1}{c|}{\textbf{Label}} \\
      \hline
      u & 1\\
      \hline
      v & 1\\
      \hline
      w & 1\\
      \hline
      x & 2\\
      \hline
    \end{tabular}
    }
  \end{subtable}%
  \caption{Solutions to instance in Figure \ref{fig:counter2}.}
  \label{tab:solutions2}
\end{table}

Figure \ref{fig:counter2} shows a $(2,1)$-stable instance for which
$\alpha$-expansion does not always find the optimal solution. Table
\ref{tab:solutions2} shows the optimal integer solution and the
solution returned when $\alpha$-expansion is run with all labels
initially set to 2. The LP relaxation is tight because this instance
is a tree \parencite{wainwrightjordan}. Suppose $\alpha$-expansion
starts with every node assigned the label 2. In the first iteration,
it finds the optimal expansion of label 1, and assigns label 1 to $u$,
$v$, and $w$, but leaves $x$ labeled 2. After this move, no expansions
can decrease the objective, so the algorithm terminates. The instance
is $(2,1)$-stable because in the adversarial perturbation (where edges
$(u,v)$ and $(w,x)$ have weights 2 and 1.5, and $(v,w)$ has weight 3),
the optimal solution is still to label $u$, $v$ with 2 and $w$, $x$
with 3.

\section{CONCLUSION}
\label{sec:discuss}
We gave conditions under which two popular algorithms for MAP
inference in Ferromagnetic Potts Models are exact.
The results in Section \ref{sec:strongstab} provide a possible avenue for
explaining why LP relaxations are often tight in practice.
For weakly stable instances, the results in Section
\ref{sec:weakstab} provide a possible explanation for the observed
phenomena that the solutions output by $\alpha$-expansion are often
visually indistinguishable from the optimal solution and that the
output does not heavily depend on the choice of initial labeling,
since we prove that the algorithm always recovers the optimal solution on
the stable set $S$ regardless of the initialization. 
Note that on $(2,2)$-stable instances, $\alpha$-expansion
is exact \emph{and} the LP relaxation is tight.

While the $\alpha$-expansion algorithm is a local search algorithm
that essentially does hill-climbing with a particular set of moves,
our results show that it reaches high-quality solutions on stable
instances. This implies the energy landscape of stable instances has
particular properties that make MAP inference tractable, and gives
many directions for future work on understanding the relationship
between stability and optimization.

\subsection*{Acknowledgments}
The authors would like to thank Fredrik D. Johansson for many helpful
discussions and for his feedback on drafts of this paper. This work
was supported by NSF AitF awards CCF-1637585 and CCF-1723344. AV is
also supported by NSF Grant No.~CCF-1652491.

\subsection*{References}
\printbibliography[heading=none]
\newpage
\newpage
\clearpage
\appendix
\section{Supplementary Material}
\subsection{Relaxation on Local Polytope}
The relaxation of \eqref{eq:MRFEM} over the \emph{local polytope} is given by:
\begin{equation*}
  \begin{aligned}
    & \min_{\mu}
    && \sum_{u\in V}\sum_{i \in L}\mu_u(i)c(u,i) + \sum_{e = (u,v)}\sum_{i, j}\mu_{e}(ij)\theta_{(u,v)}(i, j)\\
    & \text{s.t.}
    && \sum_i\mu_u(i) = 1,\ \ \ \ \ \ \ \ \ \ \ \ \  \forall i \in L.\\
    &&& \sum_j \mu_{e}(ij) = \mu_{u}(i), \ \ \ \ \ \ \ \forall e = (u,v) \in E, i \in L.\\
    &&& \sum_i \mu_{e}(ij) = \mu_{v}(j), \ \ \ \ \ \ \ \forall e = (u,v) \in E, j \in L.\\
    &&& \mu_u(i) \ge 0,\ \ \ \ \ \ \ \ \ \ \ \ \ \ \ \ \ \ \ \forall u \in V,\ i \in L.\\
    &&& \mu_e(ij) \ge 0,\ \ \ \ \ \ \ \ \ \ \ \ \ \ \ \ \ \ \forall e \in E,\ i,j \in L.
  \end{aligned}
\end{equation*}
For a Ferromagnetic Potts Model, the objective becomes:
\begin{equation*}
  \min_{\mu} \sum_{u\in V}\sum_{i \in L}\mu_u(i)c(u,i) + \smashoperator{\sum_{e = (u,v)}}w(u,v)\sum_{i,j}\mu_{e}(ij)\mathds{1}(i \ne j)
\end{equation*}
Fix the values $\mu_u(i)$. We want to minimize $$\sum_{e =
  (u,v)}w(u,v)\sum_{i,j}\mu_{e}(ij)\mathds{1}(i \ne j)$$ subject to
the constraints \begin{align*} &\sum_j \mu_{e}(ij) = \mu_{u}(i),
  \ \ \ \ \ \ \ \forall e = (u,v) \in E, i \in L.\\ &\sum_i
  \mu_{e}(ij) = \mu_{v}(j), \ \ \ \ \ \ \ \forall e = (u,v) \in E, j
  \in L.\\ &\mu_e(ij) \ge
  0,\ \ \ \ \ \ \ \ \ \ \ \ \ \ \ \ \ \ \forall e \in E,\ i,j \in L.
\end{align*}
Because $w(u,v) \ge 0$ and $\mu_e(ij) \ge 0$, we want to put as much
mass on $\mu_e(ii)$ as possible without violating a constraint, since
those terms do not appear in the objective. To that end, we set
$\mu_e(ii) = \min(\mu_u(i), \mu_v(i))$. Then using the first
constraint, the objective becomes:
\begin{align*}
&\sum_{e =(u,v)}w(u,v)\sum_{i}\mu_u(i) - \min(\mu_u(i), \mu_v(i))\\
&= \sum_{e =(u,v)}w(u,v)\left(1 - \frac{1}{2}\sum_i\mu_u(i) + \mu_v(i)\right.\\
&+ \left.\sum_i|\mu_u(i)- \mu_v(i)|\right)\\
&= \sum_{e =(u,v)}w(u,v)\sum_i|\mu_u(i) - \mu_v(i)|\\
&= \sum_{e =(u,v)}w(u,v)\frac{|\mu_u - \mu_v|}{2},
\end{align*}
where we use multiple times that $\sum_i \mu_u(i) = 1$. The LP objective is thus:
\begin{equation*}
  \min_{\mu} \sum_{u\in V}\sum_{i \in L}\mu_u(i)c(u,i) + \sum_{e = (u,v)}w(u,v)\frac{|\mu_u - \mu_v|}{2}
\end{equation*}
Identifying $\mu_u$ with $\bar{u}$ and $\mu_v$ with $\bar{v}$, we obtain the LP \eqref{eq:umllp}.

\subsection{Proof of Lemma \ref{lem:roundingguar}}
\begin{proof} This argument is similar to the one in \textcite{TwoStab}.
  First, we verify the last two conditions in Lemma
  \ref{lem:roundingguar}.  Let $\alpha = \frac{2}{k\theta} =
  \frac{5}{3}$ and $\beta = k\theta = \frac{6}{5}$. The algorithm
  clearly returns a feasible solution (i.e. a valid
  labeling). Consider any two vertices $u$ and $v$, and let $\Delta =
  d(u,v)$. There are two cases: $j(u) = j(v)$ and $j(u) \ne j(v)$. In
  the first case, let $j = j(u) = j(v)$. We have $P(u) \ne P(v)$
  exactly when $r \in (\min(\bar{u}_i, \bar{v}_i),
  \max(\bar{u}_i,\bar{v}_i))]$ and $i \ne j$. $r$ is uniformly
    distributed in $(0,\theta)$, so the probability of this occurring
    is $$\mathbb{P}[P(u) \ne P(v)] = \frac{1}{k}\sum_{i:i\ne
      j}\frac{|\bar{u}_i - \bar{v}_i|}{\theta} \le
    \frac{2}{k\theta}d(u,v) = \alpha\Delta.$$ Note that we used $u_i
    \le \varepsilon < \theta$ for $i \ne j$ and for all $u$. Now
    consider the case where $j(u) \ne j(v)$. Here $d(u,v) \ge
    d(e_{j(u)}, e_{j(v)}) - d(u, e_{j(u)}) - d(v, e_{j(v)})$ by the
    triangle inequality ($e_i$ is the $i$th standard basis vector in
    $\mathbb{R}^k)$. So $d(u,v) \ge 1 - 2\varepsilon \ge 1 - 2/30$ for
    $k \ge 3$. So $d(u,v) \ge 14/15$, and $\alpha = 5/3$ so
    $\alpha\Delta > 1$ and the bound trivially applies.

    Next we verify the ``co-appoximation'' condition. First consider
    the case where $j(u) = j(v) = j$. Then $d(u,v) \le d(u, e_{j}) +
    d(e_{j},v) \le 2\varepsilon \le 1/15$. As we showed,
    $\mathbb{P}[P(u) \ne P(v)] \le \alpha\Delta$. So $\mathbb{P}[P(u)
      = P(v)] \ge 1 - \alpha\Delta \ge \beta^{-1}(1 - \Delta)$, where
    the last inequality is because $\frac{1-\beta^{-1}}{\alpha -
      \beta^{-1}} = \frac{1/6}{5/3 - 5/6} = \frac{1}{5} \ge
    \Delta$. Now assume $j(u) \ne j(v)$. Note that if $\bar{u}_i \ge
    r$ and $\bar{v}_i \ge r$, $u$ and $v$ are both added to
    $P_i$. So \begin{align*}\mathbb{P}[P(u) = P(v)] &\ge \mathbb{P}[u_i \ge r, v_i
        \ge r]\\
      &= \frac{1}{k}\sum_{i=1}^k\frac{\min(\bar{u}_i,
      \bar{v}_i)}{\theta}.\end{align*} Here we used that for all $i$,
    $\min(\bar{u}_i, \bar{v}_i) \le \varepsilon < \theta$ since $j(u)
    \ne j(v)$. Then \begin{equation*}
      \begin{split}\mathbb{P}[P(u) = P(v)] \ge
    \frac{1}{k}\sum_{i=1}^k \frac{\bar{u}_i + \bar{v}_i - |\bar{u}_i -
      \bar{v}_i|}{2\theta}\\
    = \frac{1}{k\theta}(1 - d(u,v))
    = \beta^{-1}(1 - d(u,v)).\end{split}\end{equation*} The approximation conditions hold.

    Finally, we check the first two conditions of
    Lemma \ref{lem:roundingguar}. First consider $\mathbb{P}[P(u) = i, i \ne
      j(u)]$. This can only occur when $i$ is selected and $u$ is
    assigned to $P_i$. So $$\mathbb{P}[P(u) = i, i \ne j(u)] =
    \frac{1}{k}\mathbb{P}[\bar{u}_i \ge r] =
    \frac{1}{k}\frac{\bar{u}_i}{\theta} = \frac{5}{6}\bar{u}_i.$$ Now
    we compute $\mathbb{P}[P(u) \ne j(u)]$. A vertex $u$ clearly can
    only be assigned a label $i \ne j(u)$ if such an $i$ is selected
    and $u$ is assigned to it; namely, \begin{align*}\mathbb{P}[P(u)
        \ne j(u)] = \frac{1}{k}\sum_{i:i\ne
        j(u)}\frac{\bar{u}_i}{\theta} &= \frac{1}{k\theta}(1 -
      \bar{u}_{j(u)})\\ &= \frac{5}{6}(1 -
      \bar{u}_{j(u)}).\end{align*} This concludes the proof.
\end{proof}
\subsection{Full Proof of Theorem \ref{thm:strongthm}}
Here we reproduce the proof of Theorem \ref{thm:strongthm} in more detail.
\begin{theorem*}
  On a (2,1)-stable instance of \textsc{Uniform Metric Labeling} with
  optimal integer solution $g$, the LP relaxation \eqref{eq:umllp} is
  tight.
\end{theorem*}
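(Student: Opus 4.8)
The plan is to reproduce the argument of Section~\ref{sec:strongstab} with every constant and inequality made explicit, arguing by contradiction. Suppose the optimal LP solution $\{\bar{u}^{LP}\}$ of \eqref{eq:umllp} is fractional at some node. First I would construct the $\varepsilon$-close input to the rounding algorithm: with $\varepsilon = 1/(10k)$ as in Algorithm~\ref{alg:rounding} and $\{\bar{u}^g\}$ the LP point corresponding to the optimal integer solution $g$, set $\bar{u}_i = (1-\varepsilon)\bar{u}^g_i + \varepsilon\bar{u}^{LP}_i$ for every $u$ and $i$. Since $\bar{u}^g$ is integral, $\bar{u}_{g(u)} \ge 1-\varepsilon$, so $\{\bar{u}\}$ is $\varepsilon$-close to $g$ and $j(u) = g(u)$ for all $u$; and because $\bar{u}^{LP}$ is fractional somewhere and $\varepsilon > 0$, $\{\bar{u}\}$ is itself fractional. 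Then I would fix the $(2,1)$-perturbation $w'$ that equals $w$ on $E_g$ (the edges cut by $g$) and $\tfrac12 w$ on $E \setminus E_g$, write $Q'$ for the objective with weights $w'$, and let $h = \mathcal{R}(\{\bar{u}\})$.

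The core of the proof is to show $\mathbb{E}[Q'(g) - Q'(h)] \ge 0$, while stability forces $\mathbb{E}[Q'(g)-Q'(h)] < 0$. For the lower bound I would expand $\mathbb{E}[Q'(h)]$ into node terms $\sum_u \sum_i c(u,i)\Pr[h(u)=i]$ and edge terms $\sum_{(u,v)\in E} w'(u,v)\Pr[(u,v)\text{ cut}]$, subtract from $Q'(g) = \sum_u c(u,g(u)) + \sum_{(u,v)\in E_g} w(u,v)$, and then apply the four bounds of Lemma~\ref{lem:roundingguar}. The key point is orientation: using $c \ge 0$ and $w \ge 0$ (assumed WLOG), the ``good'' events $h(u)\ne j(u)$ and ``$(u,v)$ not cut'' are bounded below by $\tfrac56(1-\bar{u}_{j(u)})$ and $\tfrac56(1-d(u,v))$, while the ``bad'' events $h(u)=i$ with $i\ne j(u)$ and ``$(u,v)$ cut'' are bounded above by $\tfrac56\bar{u}_i$ and $\tfrac53 d(u,v)$. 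Here the choice of $w'$ is what makes everything line up: the factor $\tfrac12$ on uncut edges cancels the $\tfrac53$ in the cut probability to leave $\tfrac56$, the same factor produced on cut edges and on the node costs. Collecting terms yields $\mathbb{E}[Q'(g)-Q'(h)] \ge \tfrac56\big(Q(g) - Q(\{\bar{u}\})\big)$, where $Q(\{\bar{u}\})$ is the \emph{unperturbed} LP objective. Since $Q$ is convex in the node and edge variables (the node part is linear and $d(u,v)=\tfrac12\|\bar u-\bar v\|_1$ is convex), and $\{\bar{u}\}$ is a convex combination of $\{\bar{u}^g\}$ and $\{\bar{u}^{LP}\}$ with $Q(\{\bar{u}^{LP}\}) \le Q(\{\bar{u}^g\}) = Q(g)$, we get $Q(\{\bar{u}\}) \le Q(g)$, hence $\mathbb{E}[Q'(g)-Q'(h)] \ge 0$.

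For the contradiction, condition on whether $h = g$: the $h=g$ contribution to $\mathbb{E}[Q'(g)-Q'(h)]$ vanishes, so $\mathbb{E}[Q'(g)-Q'(h)] = \mathbb{E}[Q'(g)-Q'(h)\mid h\ne g]\Pr[h\ne g]$. Fractionality of $\{\bar{u}\}$ together with the first bound of Lemma~\ref{lem:roundingguar} gives $\Pr[h\ne g] > 0$, since some node with $\bar{u}_{j(u)} < 1$ is relabeled with positive probability. And $w'$ is a $(2,1)$-perturbation, so $(2,1)$-stability says $Q'(h) > Q'(g)$ for every $h \ne g$; thus the conditional expectation is strictly negative and $\mathbb{E}[Q'(g)-Q'(h)] < 0$, contradicting the lower bound. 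Hence the optimal LP solution has no fractional node, i.e. the relaxation is tight.

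I expect the main obstacle to be the bookkeeping in the second paragraph: tracking signs carefully when applying Lemma~\ref{lem:roundingguar} (using nonnegativity of $c$ and $w$ to chain the one-sided bounds in the correct direction) and checking that the specific $(2,1)$-perturbation $w'$ is exactly what makes the edge-cost loss scale by the same $\tfrac56$ as the node-cost loss, so that the whole expression collapses to $\tfrac56\big(Q(g) - Q(\{\bar{u}\})\big)$. The remaining ingredients---$\varepsilon$-closeness, the convexity step, and the conditioning argument---are routine.
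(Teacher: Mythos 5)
Your proposal is correct and follows essentially the same argument as the paper's proof: the same $\varepsilon$-close convex combination $\bar{u} = (1-\varepsilon)\bar{u}^g + \varepsilon\bar{u}^{LP}$, the same $(2,1)$-perturbation halving weights off $E_g$, the same application of Lemma~\ref{lem:roundingguar} collapsing the expectation to $\tfrac56\bigl(Q(g) - Q(\{\bar{u}\})\bigr)$, and the same convexity-plus-stability contradiction. No gaps.
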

\begin{proof}
Assume for a contradiction that the optimal LP solution
$\{\bar{u}^{LP}\}$ of \eqref{eq:umllp} is fractional. To construct a
stability-violating labeling, we will run Algorithm \ref{alg:rounding}
on a fractional labeling $\{\bar{u}\}$ constructed from
$\{\bar{u}^{LP}\}$ and the optimal integer solution $g$. We then use
Lemma \ref{lem:roundingguar} to show that in expectation, the output
of $\mathcal{R}(\{\bar{u}\})$ must be better than the optimal integer
solution in a particular $(2,1)$-perturbation, which contradicts
$(2,1)$-stability.

Let $\{\bar{u}^g\}$ be the solution to \eqref{eq:umllp} corresponding
to $g$, and define the following $\varepsilon$-close solution
$\{\bar{u}\}$: for every $u$ and every $i$, set $\bar{u}_i =
(1-\varepsilon)\bar{u}^g_i + \varepsilon\bar{u}^{LP}_i$. Note that
$\{\bar{u}\}$ is fractional and $j(u) = g(u)$ for all $u$.

Recall that $E_g$ is the set of edges cut by the optimal solution
$g$. Define the following $(2,1)$-perturbation $w'$ of the weights
$w$:
\begin{equation*}
  w'(u,v) = \begin{cases}
    w(u,v) &(u,v) \in E_g\\    
    \frac{1}{2}w(u,v) &(u,v) \in E\setminus E_g.
  \end{cases}
\end{equation*}
We refer to the objective with modified weights $w'$ as $Q'$ (that is,
$Q'$ is the objective in the instance with weights $w'$ and costs
$c$).

Now let $h = \mathcal{R}(\{\bar{u}\})$. To compare $g$ and $h$, we
will compute $\mathbb{E}[Q'(g) - Q'(h)]$, where the expectation is
over the randomness of the rounding algorithm. By definition,
\begin{align*}
  \mathbb{E}\left[Q'(g) - Q'(h)\right] &= \mathbb{E}[Q'(g) - Q'(h) | h =
    g]\Pr(h = g) \\
  &+ \mathbb{E}[Q'(g) - Q'(h) | h \ne g]\Pr(h \ne g).
\end{align*}
The first term of the sum above is clearly zero. Further, as
$\{\bar{u}\}$ is fractional, the guarantees in Lemma
\ref{lem:roundingguar} imply that $\Pr(h \ne g) > 0$. By
$(2,1)$-stability of the instance, any labeling $h \ne g$ must satisfy
$Q'(h) > Q'(g)$. So stability and fractionality of the LP imply
$\mathbb{E}[Q'(g) - Q'(h)] < 0$.

If we compute $\mathbb{E}[Q'(g) - Q'(h)]$ and simplify using Lemma
\ref{lem:roundingguar} and the definition of $w'$ (see the appendix
for a full derivation), we obtain:
\begin{align*}
  Q'(g) - Q'(h) &= \sum_{u \in V_{\Delta}}c(u,g(u)) + \smashoperator{\sum_{(u,v) \in E_g \setminus E_h}}w'(u,v)\\
    &- \sum_{u \in V_{\Delta}}c(u,h(u)) - \smashoperator{\sum_{(u,v) \in E_h \setminus E_g}}w'(u,v).
\end{align*}
Taking the expectation, we obtain:
\begin{align*}
  \mathbb{E}[Q'(g) - Q'(h)] &= \sum_{u \in V}c(u,g(u))\Pr(h(u) \ne g(u))\\
  &+ \smashoperator{\sum_{(u,v) \in E_g}}w'(u,v)\Pr((u,v) \mbox{ not cut})\\
  &- \sum_{u \in V}\sum_{i \ne g(u)}c(u,i)\Pr(h(u) = i)\\
  &- \smashoperator{\sum_{(u,v) \in E \setminus E_g}}w'(u,v)\Pr((u,v) \mbox{ cut}).
\end{align*}
Applying Lemma \ref{lem:roundingguar} with $j(u) = g(u)$,
\begin{align*}
  \mathbb{E}[Q'(g) - Q'(h)] &\ge \frac{5}{6}\left(\sum_{u \in V}c(u,g(u))(1-\bar{u}_{g(u)})\right.\\
  &+ \left.\smashoperator{\sum_{(u,v) \in E_g}}w'(u,v)(1-d(u,v))\right.\\
  &- \left.\sum_{u \in V}\sum_{i \ne g(u)}c(u,i)\bar{u}_i\right.\\
  &- \left.\smashoperator{\sum_{(u,v) \in E \setminus E_g}}2w'(u,v)d(u,v)\right)
\end{align*}
Using the definition of $w'$,
\begin{equation*}
\begin{split}
  \mathbb{E}[Q'(g) - Q'(h)] \ge \frac{5}{6}\left(\sum_{u \in V}c(u,g(u)) + \smashoperator{\sum_{(u,v) \in E_g}}w(u,v)\right.\\
  - \left.\sum_{u \in V}\sum_{i \in L}c(u,i)\bar{u}_i\right. - \left.\smashoperator{\sum_{(u,v) \in E}}w(u,v)d(u,v)\right)
\end{split}
\end{equation*}
The first two terms are simply $Q(g)$, and the last two are the
objective $Q(\{\bar{u}\})$ of the LP solution $\bar{u}$. Since
$\bar{u} = (1 - \varepsilon)\bar{u}^g + \varepsilon\bar{u}^{LP}$ and
$Q(\{\bar{u}^{LP}\}) \le Q(\{\bar{u}^g\})$, the convexity of the LP
objective implies $Q(\{\bar{u}\}) \le Q(\{\bar{u}^g\}) = Q(g)$. So
$\mathbb{E}[Q'(g) - Q'(h)] \ge 0$. But stability of the instance and
fractionality of the LP solution implied $\mathbb{E}[Q'(g) - Q'(h)] <
0$.
\end{proof}

\subsection{Generating Counterexamples}
The following procedure can be used to find $(\beta, \gamma)$-stable
instances.
\begin{enumerate}
 \item Given a fixed number of nodes $n$ and labels $k$, randomly generate a graph $G$ as follows:
   \begin{enumerate}
     \item Connect any two nodes $(u,v)$ with an edge with probability
       \texttt{connectProb}.
     \item When connecting two nodes, choose the edge weight $w(u,v)$ uniformly at random from $\mathbb{Z} \cap [0, $ \texttt{weightMax}$]$.
   \end{enumerate}
 \item For each node $u$, choose an index $i$ uniformly at random from
   $\{1\ldots k\}$. Draw $c(u,i)$ uniformly at random from $\mathbb{Z} \cap [0,
   $ \texttt{costMax}$]$. Set $c(u,j) = 0$ for $j \ne i$.
 \item Find the optimal solution $g$ to the instance $(G, w, c, L)$.
 \item Let $E_g$ be the set of edges cut by $g$, and consider the
   following adversarial perturbation $w'$ of $w$: \begin{equation*}
     w'(u,v) = \begin{cases} \frac{1}{\beta}w(u,v) & (u,v) \in E
       \setminus E_g\\ \gamma w(u,v) & (u,v) \in E_g
   \end{cases}
 \end{equation*} Let $Q'$ be the objective with these modified weights.
   \item Enumerate the $k^n - 1$ possible labelings not equal to
     $g$. If any of them have $Q'(h) \le Q'(g)$, return to step 1. Otherwise, print $V,
     E, w, c$.
\end{enumerate}
Following this procedure, we can also enforce additional properties of
the instance in step 5 before printing it out. For instance, we can
enforce that the LP must be fractional on the instance, or that
$\alpha$-expansion must not find the optimal solution. If these
additional conditions fail to hold, we return to step 1.

The examples in Section \ref{sec:counter} were found with
\texttt{connectProb} $= 0.5$, \texttt{weightMax} $= 4$,
\texttt{costMax} $= 20$, and then modified for simplicity. Steps 3-5
were repeated for each modification to ensure the resulting instances
satisfied the correct stability conditions. In Section
\ref{sec:lpcounter}, $\beta = 1$ and $\gamma = 2$; in Section
\ref{sec:alphacounter}, $\beta = 2$ and $\gamma = 1$.

The following lemma proves that steps 3-5 are sufficient to verify stability.
\begin{lemma}
  \label{lem:advsuff}
Let $w^*$ be an arbitrary $(\beta, \gamma)$-perturbation of the
weights $w$, and let $w'$ be the adversarial perturbation for the
optimal solution $g$. Then for any labeling $h$, $Q^*(h) \le Q^*(g)$
implies $Q'(h) \le Q'(g)$. In other words, if a labeling $h$ violates
stability in any perturbation, it violates stability in the
adversarial perturbation $w'$.
\end{lemma}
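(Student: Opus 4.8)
The plan is to prove the stronger statement that $Q'(g) - Q'(h) \ge Q^*(g) - Q^*(h)$ for \emph{every} labeling $h$; the lemma then follows at once, since $Q^*(h) \le Q^*(g)$ makes the right-hand side nonnegative, hence the left-hand side is nonnegative, i.e. $Q'(h) \le Q'(g)$.

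First I would expand both objective differences. Neither $w'$ nor $w^*$ perturbs the node costs, so the node-cost contribution to $Q'(g) - Q'(h)$ equals that to $Q^*(g) - Q^*(h)$ and cancels when we subtract the two differences. For the edge terms, set $A = E_g \setminus E_h$ and $B = E_h \setminus E_g$; edges cut by both $g$ and $h$ contribute the same amount to $\tilde Q(g)$ and $\tilde Q(h)$ for any weight function $\tilde w$, so they cancel as well. Thus for any $\tilde w$ one has $\tilde Q(g) - \tilde Q(h) = (\text{node terms}) + \sum_{(u,v)\in A}\tilde w(u,v) - \sum_{(u,v)\in B}\tilde w(u,v)$, and it suffices to show
\[
\sum_{(u,v)\in A} w'(u,v) - \sum_{(u,v)\in B} w'(u,v) \ \ge\ \sum_{(u,v)\in A} w^*(u,v) - \sum_{(u,v)\in B} w^*(u,v).
\]

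Next I would plug in the definition of the adversarial perturbation using the structure of $A$ and $B$. Every edge of $A$ is cut by $g$, so it lies in $E_g$ and $w'(u,v) = \gamma w(u,v)$ there, while the $(\beta,\gamma)$-perturbation bound gives $w^*(u,v) \le \gamma w(u,v)$; summing, $\sum_{A} w' \ge \sum_{A} w^*$. Every edge of $B$ is \emph{not} cut by $g$, so it lies in $E \setminus E_g$ and $w'(u,v) = \tfrac1\beta w(u,v)$ there, while $w^*(u,v) \ge \tfrac1\beta w(u,v)$; summing, $-\sum_{B} w' \ge -\sum_{B} w^*$. Adding these two inequalities yields exactly the displayed inequality, which finishes the argument.

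The conceptual content is simply that $w'$ is the worst legal perturbation for $g$: it inflates precisely the edges $g$ pays for (by the maximum factor $\gamma$) and shrinks every other edge (by the maximum factor $1/\beta$), so no admissible $w^*$ can widen $g$'s margin over a competitor $h$ beyond what $w'$ already does. I do not expect a genuine obstacle here; the only thing requiring care is the edge bookkeeping — verifying that the common cut edges and the common node costs cancel in the difference of differences, and that every edge of $A$ indeed sits in $E_g$ while every edge of $B$ sits in $E\setminus E_g$ — after which the two pointwise perturbation bounds close the proof.
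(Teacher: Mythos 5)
Your proposal is correct and follows essentially the same route as the paper's proof: both establish the stronger pointwise inequality $Q'(g)-Q'(h) \ge Q^*(g)-Q^*(h)$ by cancelling the unperturbed node costs and the commonly cut edges, and then applying the bounds $w^* \le \gamma w$ on $E_g\setminus E_h$ and $w^* \ge \tfrac1\beta w$ on $E_h\setminus E_g$ together with the definition of the adversarial $w'$. No gaps; the bookkeeping you flag (that $E_g\setminus E_h \subseteq E_g$ and $E_h\setminus E_g \subseteq E\setminus E_g$) is exactly what the paper's derivation uses.
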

\begin{proof}
We show that $Q^*(g) - Q^*(h) \le Q'(g) - Q'(h)$. Let $V_{\Delta} =
\{u \in V\ |\ g(u) \ne h(u)\}$. Recall that $E_g$ and $E_h$ are the
sets of edges cut by $g$ and $h$, respectively. We compute
\begin{align*}
  Q'(g) - Q'(h) &= \sum_{u \in V_{\Delta}}c(u,g(u)) + \smashoperator{\sum_{(u,v) \in E_g \setminus E_h}}w'(u,v)\\
    &- \sum_{u \in V_{\Delta}}c(u,h(u)) - \smashoperator{\sum_{(u,v) \in E_h \setminus E_g}}w'(u,v).
\end{align*}
Using the definition of $w'$,
\begin{align*}
  Q'(g) - Q'(h) &= \sum_{u \in V_{\Delta}}c(u,g(u)) + \smashoperator{\sum_{(u,v) \in E_g \setminus E_h}}\gamma w(u,v)\\
    &- \sum_{u \in V_{\Delta}}c(u,h(u)) - \smashoperator{\sum_{(u,v) \in E_h \setminus E_g}}\frac{w(u,v)}{\beta}.
\end{align*}
Since $w^*$ is a valid $(\beta, \gamma)$-perturbation,
$\frac{1}{\beta}w(u,v) \le w^*(u,v) \le \gamma w(u,v)$. Then since all
the $c$'s and $w$'s are nonnegative, 
\begin{align*}
  Q'(g) - Q'(h) &\ge \sum_{u \in V_{\Delta}}c(u,g(u)) + \smashoperator{\sum_{(u,v) \in E_g \setminus E_h}}w^*(u,v)\\
    &- \sum_{u \in V_{\Delta}}c(u,h(u)) - \smashoperator{\sum_{(u,v) \in E_h \setminus E_g}}w^*(u,v)\\
    &= Q^*(g) - Q^*(h).
\end{align*}
\end{proof}

\end{document}